\newcolumntype{a}{>{\columncolor{blue!15}}c}
\def\frocc{FROCC\xspace}
\def\RR{\mathbb{R}}
\def\lab{lab}
\def\data{dat}
\newcommand{\exsub}[2]{\mbox{E}_{#1}\left\{#2\right\}}
\newcommand{\prsub}[2]{\mbox{Pr}_{#1}\left\{#2\right\}}
\newcommand{\ind}[1]{\mathrm{1}_{#1}}
\newcommand{\prob}[1]{\mbox{Pr}\left\{#1\right\}}
\newcommand{\ex}[1]{\mbox{E}\left\{#1\right\}}
\newtheorem{theorem}{Theorem}
\newtheorem{definition}[theorem]{Definition}
\newtheorem{fact}[theorem]{Fact}
\newtheorem{lemma}[theorem]{Lemma}
\newtheorem{proposition}[theorem]{Proposition}
\newcommand{\ignore}[1]{}
\newif\ifcutlevone
\newif\ifcutlevtwo
\newif\ifcutlevthree
\def\vec#1{\boldsymbol{#1}}
\def\1#1{\mathbbm{1}_{#1}}
\newcommand\Psub[2]{\mbox{P}_{#1}(#2)}
\def\E#1#2{\mbox{E}_{#1}(#2)}
\def\auc{\texttt{AUC ROC}}
\begin{document}

\title{FROCC: Fast Random projection-based One-Class Classification}

\author{Arindam Bhattacharya         \and
        Sumanth Varambally \and
        Amitabha Bagchi \and
        Srikanta Bedathur %etc.
}

\maketitle

\begin{abstract}
    We present Fast Random projection-based One-Class Classification (FROCC), an extremely efficient method for one-class classification. Our method is based on a simple idea of transforming the training data by projecting it onto a set of random unit vectors that are chosen uniformly and independently from the unit sphere, and bounding the regions based on separation of the data. FROCC can be naturally extended with kernels. We provide a new theoretical framework to prove that that FROCC generalizes well in the sense that it is stable and has low bias for some parameter settings. FROCC achieves up to 3.1 percent points better \auc, with 1.2--67.8$\times$ speedup in training and test times over a range of state-of-the-art benchmarks including the SVM and the deep learning based models for the OCC task.

    % \keywords{one class classification \and ensemble classifier \and random projection \and stable classifier \and kernel based method
    % }
\end{abstract}
\section{Introduction}

One-class classification (OCC) has attracted a lot of attention over the years under various names such as novelty detection~\citep{Scholkopf2000}, anomaly or outlier detection~\citep{Tax2004,Sakurada_Yairi_2014}, concept learning~\citep{japkowicz1995}, and so on. Unlike the classical binary (or multi-class) classification problem, the problem addressed in OCC is much simpler: to simply identify all instances of a class by using only a (small) set of examples from that class. Therefore it is applicable in settings where only ``normal'' data is known  at training time, and the model is expected to accurately describe this normal data. After training, any data which deviates from this normality is treated as an anomaly or a novel item. Real-world applications include intrusion detection~\citep{anomaly2009,chandola2009anomaly}, medical diagnoses~\citep{towardsmedical}, fraud detection~\citep{advfraud}, etc., where the requirement is to offer not only high precision, but also the ability to scale training and testing time as the volume of (training) data increases.

Primary approaches to OCC include one-class variants of support vector machines (SVM)~\citep{Scholkopf2000}, probabilistic density estimators describing the support of normal class~\citep{cohen2008novelty,Rousseeuw_Driessen_1999}, and recently, deep-learning based methods such as auto-encoders (AE) and Generative Adversarial Networks (GAN) which try to learn the distribution of normal data samples during training~\citep{Sakurada_Yairi_2014,Perera2018LearningDF,ruff2018deep}. However, almost all of these methods are either extremely slow to train or require significant hyper-parameter tuning or both.

We present an alternative approach based on the idea of using a collection of random projections onto a number of randomly chosen 1-dimensional subspaces (vectors) resulting in a simple yet surprisingly powerful one-class classifier. Random projections down to a single dimension or to a small number of dimensions have been used in the past to speed up approximate nearest neighbor search~\citep{kleinberg-stoc:1997,ailon2006approximate,indyk2007nearest} and other problems that rely on the distance-preserving property guaranteed by the Johnson-Lindenstrauss Lemma but our problem space is different. We use random projections in an entirely novel fashion based on the following key insight: for the one-class classification problem, the preservation of distances is \emph{not} important, we only require the class boundary to be preserved. The preservation of distances is a {\em stronger} property which implies that the class boundary is preserved, but the converse need not hold. We don't need to preserve distances. All we need to do is ``view'' the training data from a ``direction'' that shows that the outlier is separated from the data.

Formalizing this intuition, we develop an algorithm for one-class classification called \frocc\ (\underline{F}ast   \underline{R}andom projection-based \underline{O}ne-\underline{C}lass \underline{C}lassification) that essentially applies a simplified envelope method to random projections of training data to provide very strong classifier. \frocc\  also comes with a theoretically provable generalization property: the method is stable and has low bias.

At a high level, \frocc works as follows: given only positively labelled data points, we select a set of random vectors from the unit sphere, and project all the training data points onto these random vectors. We compactly represent all the training data projections along each vector using one (or more) intervals, based on the separation parameter \(\varepsilon\). If a test data point's projection with the random vector is within the interval spanned by {\em all} the training data projections, we say it is part of the normal class, otherwise it is an outlier. The parameter \(\varepsilon\) breaks the single interval used to represent the random vector projection into \emph{dense} clusters with a density lower-bound $\frac{1}{\varepsilon}$, resulting in a collection of intervals along each vector.

The simplicity of the proposed model provides us with a massive advantage in terms of computational cost -- during both training and testing -- over all the existing methods ranging from SVMs to deep neural network models. We achieve 1--16$\times$ speedup for non-deep baselines like One Class SVM (OCSVM)~\cite{Scholkopf2000} and Isolation forests. For state-of-the-art deep learning based approaches like Deep-SVDD and Auto Encoder we achieve a speedup to the tune of 45$\times$ -- 68$\times$.  At the same time, the average area under the ROC curve is 0.3 — 3.1 percent points \emph{better than} the state-of-the-art baseline methods on six of the eleven benchmark datasets we used in our empirical study.

%\sbcomment{Need to write here about the speed of classification here, before talking about kernel methods}.

Like SVM our methods can be seamlessly extended to use kernels. These extensions can take advantage of kernels that may be suitable for certain data distributions, without changing the overall one-class decision framework of \frocc.
\begin{figure}[tbp]
    \resizebox{\columnwidth}{!}{
        \footnotesize
        \begin{tabular}{cccc}
            \toprule
            \multicolumn{4}{c}{Multi-modal Gaussian Data}                                                                                                                                                                                                                  \\
            ROC = 0.816                                                                                    & ROC = 0.824                                                                                  & ROC = 0.968                     & ROC = 0.836                  \\
            \includegraphics[width=0.24\columnwidth]{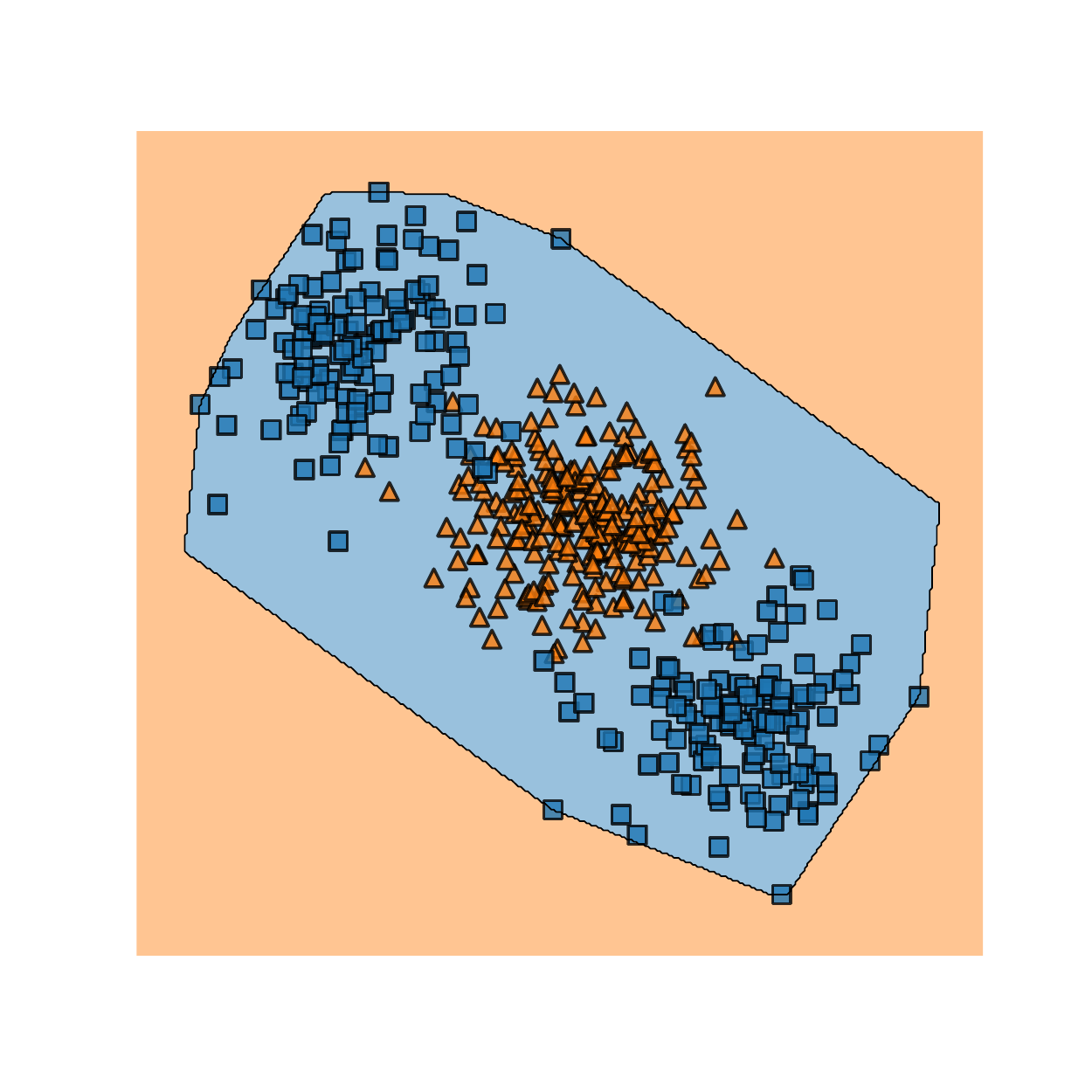}    & \includegraphics[width=0.24\columnwidth]{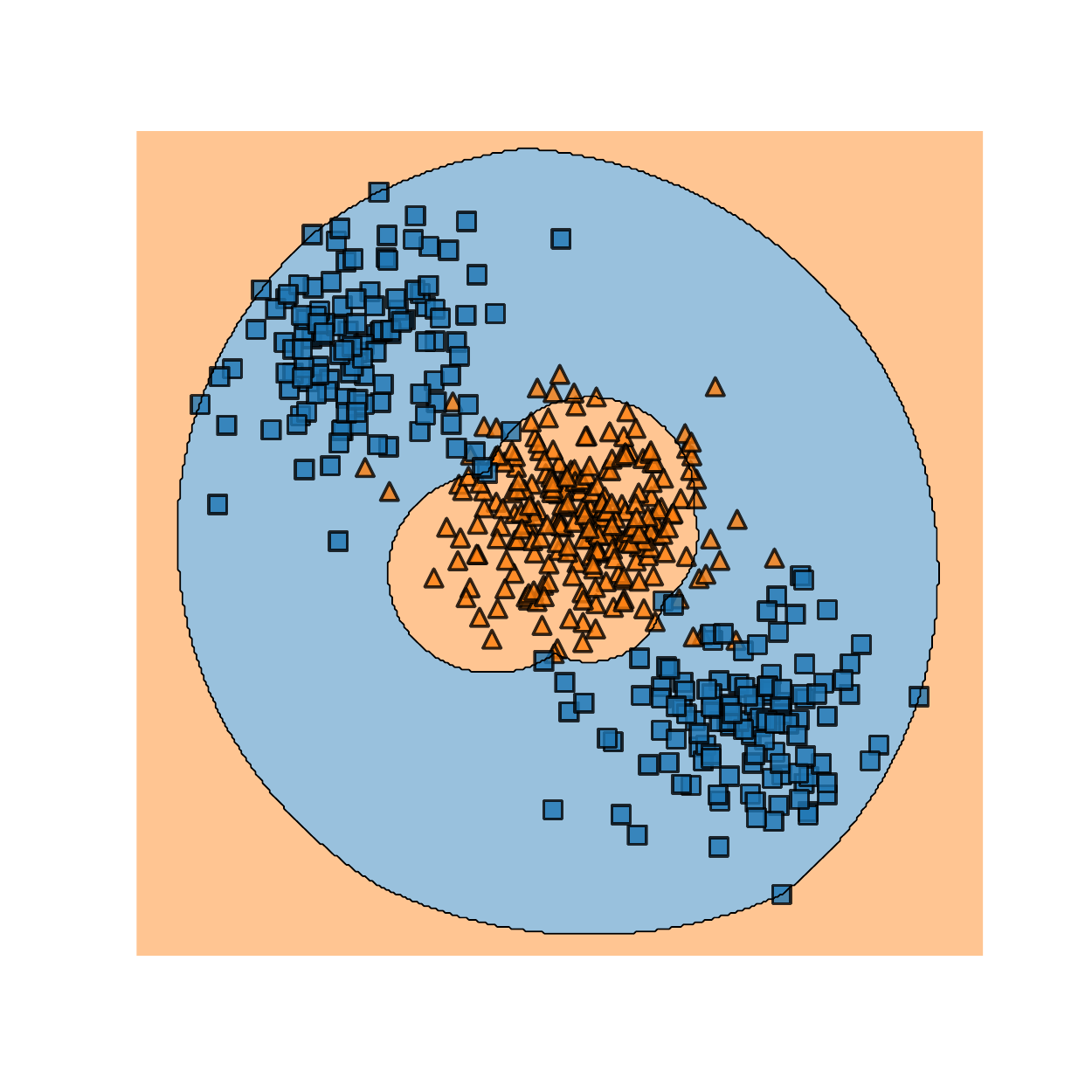} &
            \includegraphics[width=0.24\columnwidth]{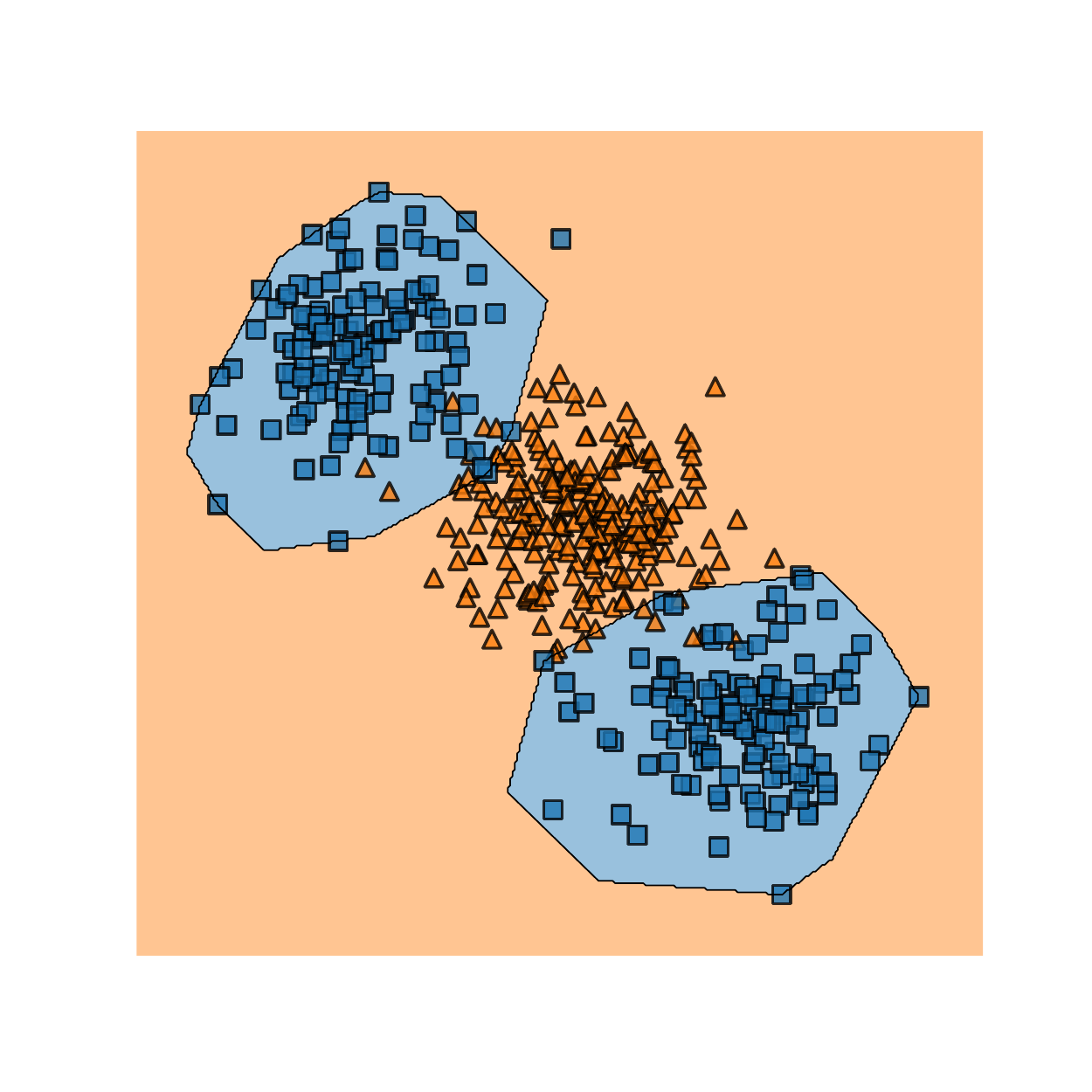}   &
            \includegraphics[width=0.24\columnwidth]{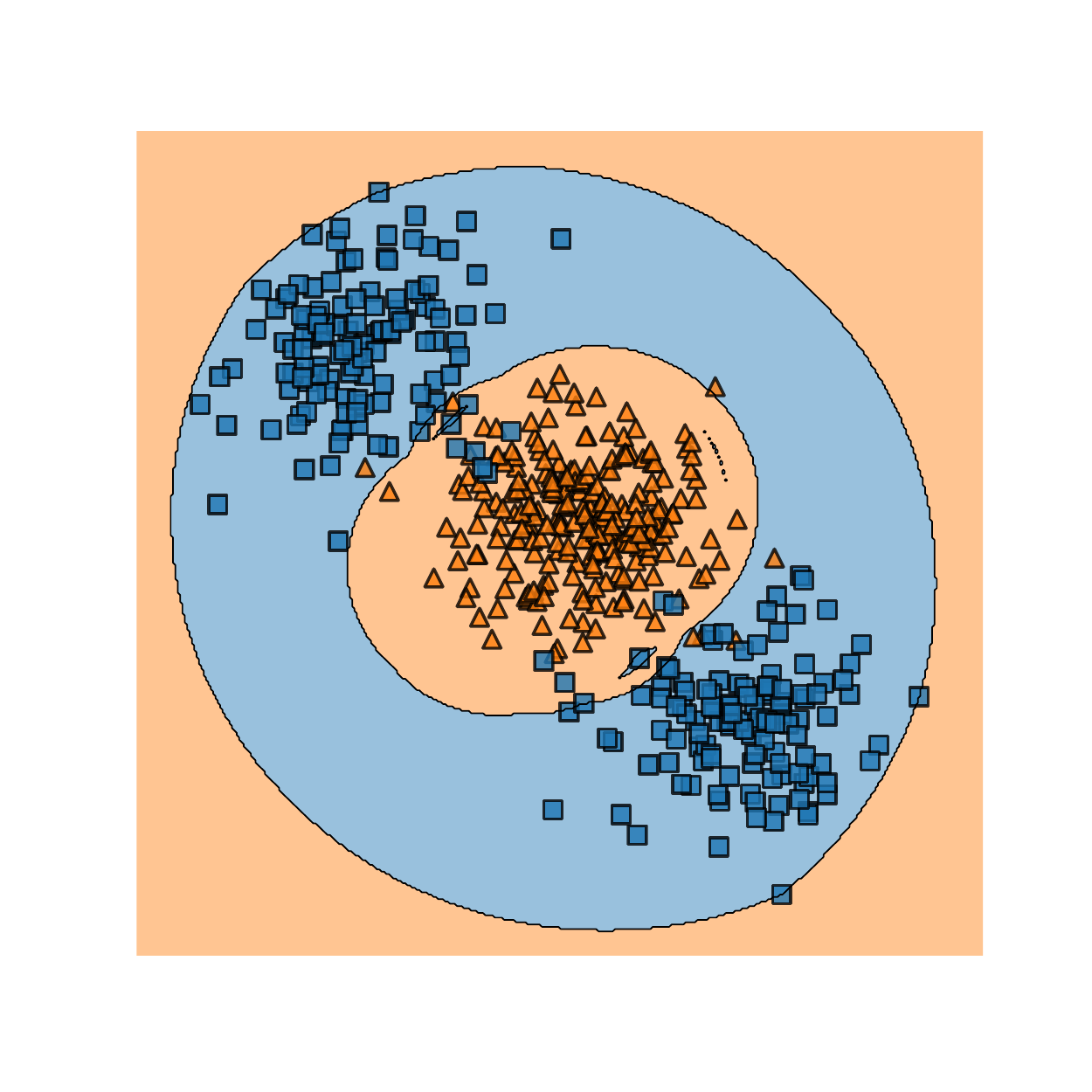}                                                                                                                                                                  \\
            \cmidrule(lr){1-4}
            \multicolumn{4}{c}{Moon Data}                                                                                                                                                                                                                                  \\
            ROC = 0.734                                                                                    & ROC = 0.783                                                                                  & ROC = 0.823                     & ROC = 0.911                  \\
            \includegraphics[width=0.24\columnwidth]{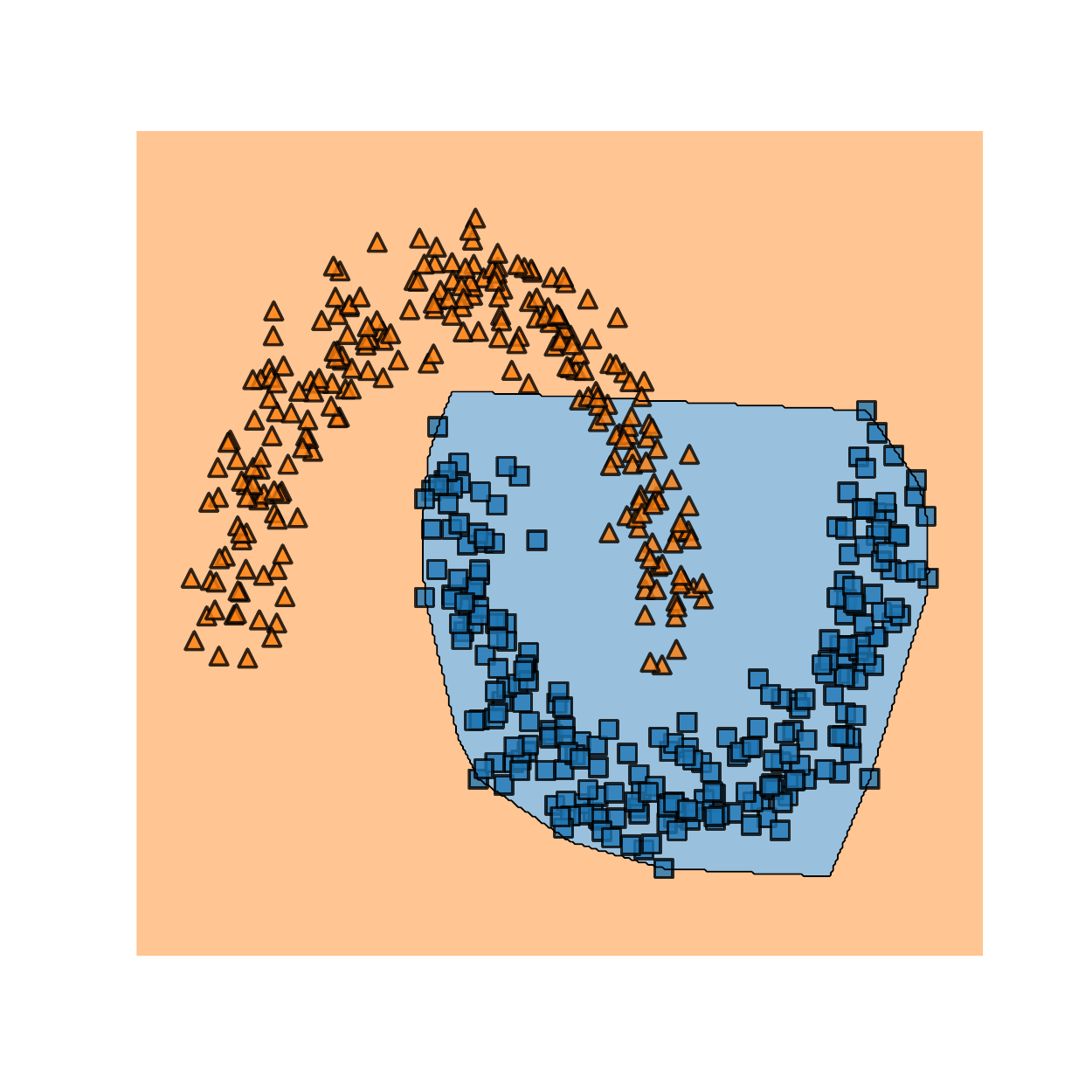}  &
            \includegraphics[width=0.24\columnwidth]{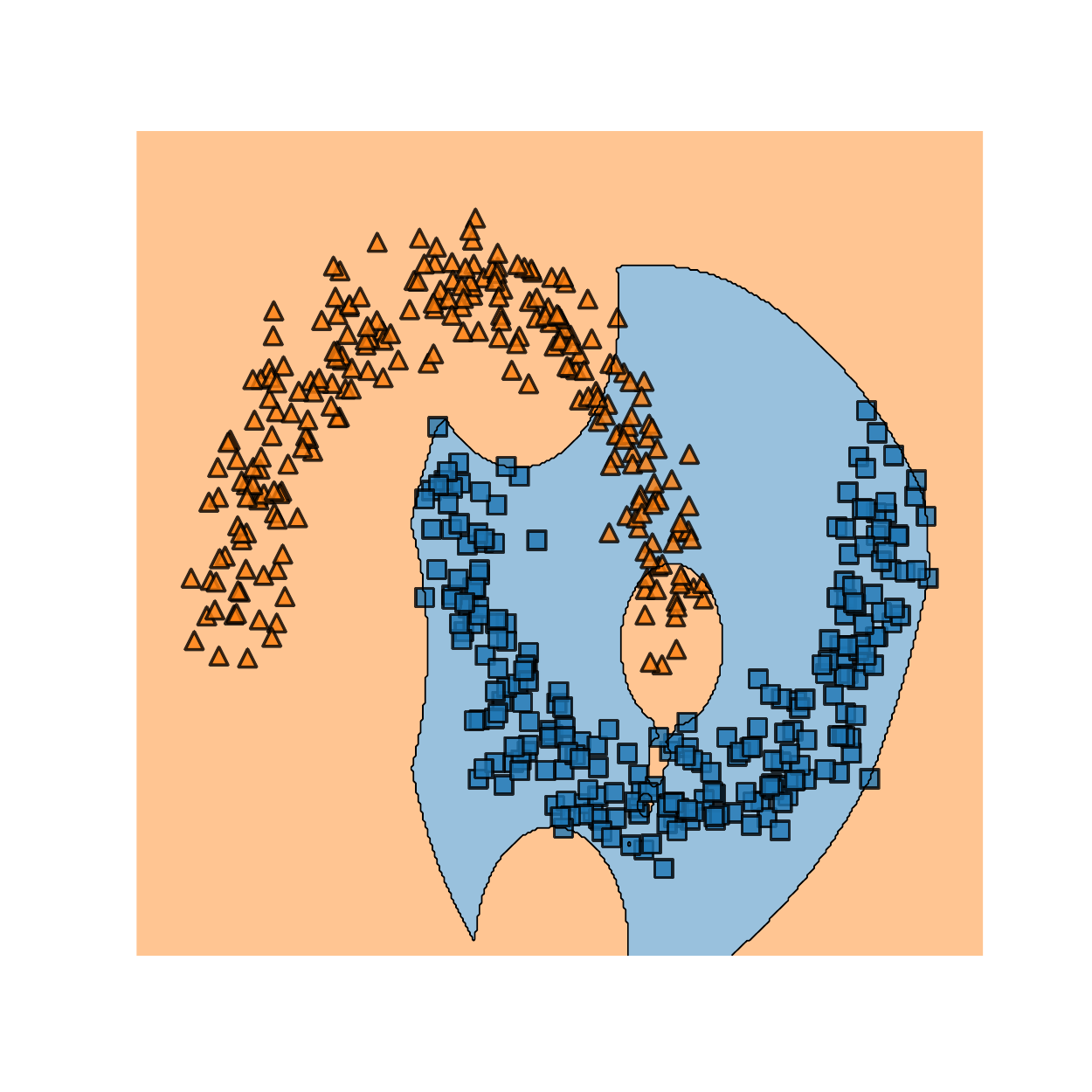} &
            \includegraphics[width=0.24\columnwidth]{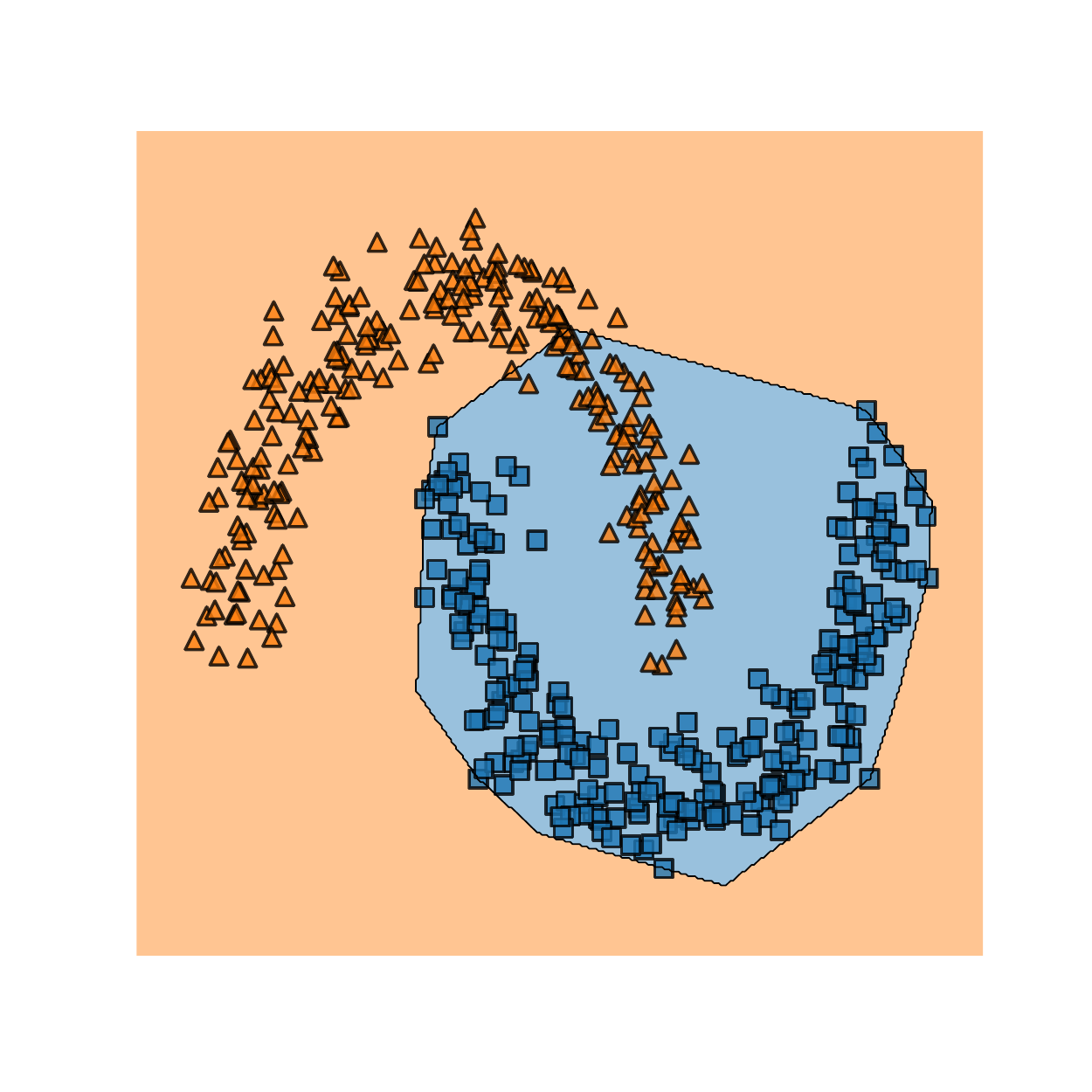} &
            \includegraphics[width=0.24\columnwidth]{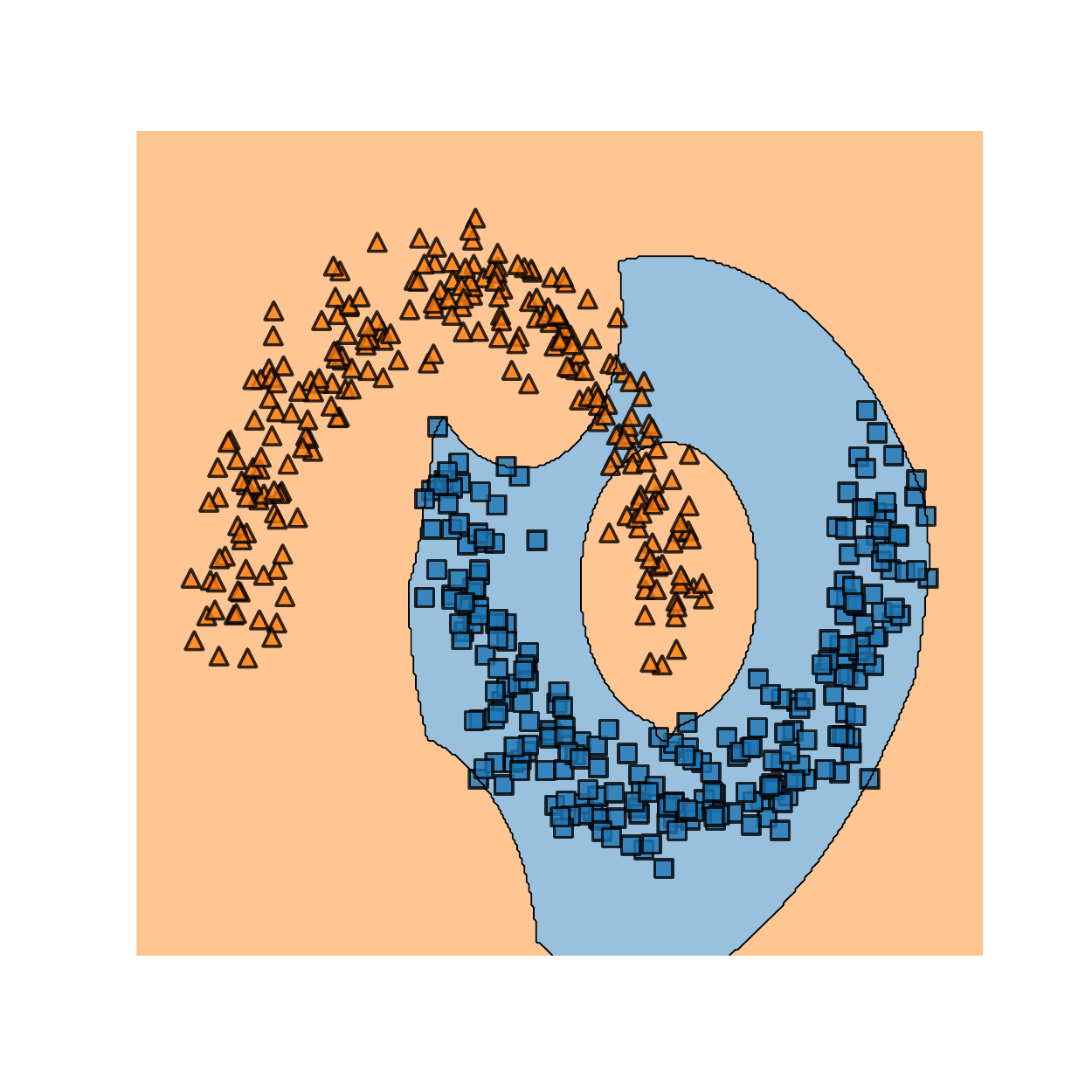}                                                                                                                                                                \\
            (a) $\varepsilon=1, K=linear$                                                                  & (b) $\varepsilon=1, K=rbf$                                                                   & (c) $\varepsilon=0.1, K=linear$ & (d) $\varepsilon=0.1, K=rbf$ \\
            \bottomrule
        \end{tabular}
    }
    \caption{Classification boundaries for different variants of \frocc. Blue-colored points are the ``normal'' class.}% In Section~\ref{sec:expt} we show results across a variety of datasets.}% and $\efrocc$ are dependent on the nature of the underlying data, $\kefrocc$ consistently performs better, independent of the nature of the data distribution (see Section~\ref{sec:expt} for more details).}
    \label{fig:heatmaps_boundaries}
\end{figure}

To understand the power of our method consider the one-class decision boundaries for various methods on two toy datasets visualized in Figure~\ref{fig:heatmaps_boundaries}. We see that \frocc with \(\varepsilon = 0.1\) computes a very desirable decision boundary for Multi-modal Gaussian data, a case which is known to be difficult for SVM to handle. In this case incorporating a standard Gaussian RBF kernel doesn't improve the ROC for \frocc. In the case of the Moon-shaped data, however, this kernel is able to direct \frocc to better decision boundaries which reflect in better ROC scores.

Through a range of experiments we show that the in most cases \frocc is able to achieve high scores in ROC as well as Precision@n measures with 2--3 orders of improvement in speed.

\paragraph{Our contributions.}
\begin{itemize}
    \item We define a powerful and efficient random-projection based one class classifier: \frocc.
    \item We show theoretically that \frocc is a stable classification algorithm.
    \item We perform an extensive experimental evaluation on several data sets that shows that our methods outperform traditional methods in both quality of results and speed, and that our methods are comparable in quality to recent neural methods while outperforming them in terms of speed by three orders of magnitude.
\end{itemize}

\paragraph{Organization.} We discuss related literature in Sec.~\ref{sec:related}. Formal definitions of our methods are presented in Sec.~\ref{sec:methods}. In Section~\ref{sec:theory} we provide analysis of stability of our methods. The experiments setup and results of an extensive experimental comparison with several baselines are discussed in Sec.~\ref{sec:expt}. We make some concluding remarks in Sec.~\ref{sec:conclusion}.

\section{Related Work}
\label{sec:related}

\paragraph{One Class Classification.}%
\label{sec:one-class-class}
One class classification has been studied extensively so we don't attempt to survey the literature in full detail, referring the reader instead to the survey of traditional methods by~\cite{khan2009survey}. A more recent development is the application of deep learning-based methods to this problem. A partial survey of these methods is available in~\cite{ruff2018deep}.

Unlike deep learning-based methods our work takes a more traditional geometric view of the one class problem in a manner similar to~\cite{Tax2004} and~\cite{Scholkopf2000} but we do not rely on optimization as a subroutine like their methods do. This gives us an efficiency benefit over these methods in addition to the improvement we get in terms of quality of results. An additional efficiency advantage our \frocc methods have is that we do not need to preprocess to re-center the data like these two methods do.

Our methods operate in the paradigm where the training data consists of a fully labeled set of examples {\em only} from the normal class. We do not use even a small number of examples from the negative class as done in, e.g., \cite{Borisyak-arxiv:2019}. Neither does our method use both labeled and unlabelled data as in the PAC model~\cite{valiant-cacm:1984}. Our methods are randomized in nature which put them in the same category as probabilistic methods that attempts to fit a distribution on the positive data. One examples of such a method is learning using Parzen window density estimation~\cite{cohen2008novelty}. We consider one such method as a baseline: \cite{fauconnier2009outliers} apply Minimum Co-variance Determinant Estimator~\cite{Rousseeuw_Driessen_1999} to find the region whose co-variance matrix has the lowest determinant, effectively creating an elliptical envelope around the positive data.

We also compare our algorithms to deep learning-based methods. Specifically we compare our methods to the Auto-encoder-based one class classification method proposed by~\cite{Sakurada_Yairi_2014}.

Another baseline we compare against is the deep learning extension of Support Vector Data Description (SVDD) proposed by~\cite{ruff2018deep}, where a neural network learns a transformation from input space to a useful space. In both cases our results are comparable and often better in terms of ROC while achieving a high speedup.

\paragraph{Random Projection-based Algorithms.}
Random projection methods have been productive in nearest-neighbor contexts due to the fact that it is possible to preserve distances between a set of points if the number of random dimensions chosen is large enough, the so-called Johnson-Lindenstrauss property, c.f., e.g. locality sensitive hashing~\cite{indyk-stoc:1998,ailon2006approximate}. Our methods do not look to preserve distances, only the boundary, and, besides, we are not necessarily looking for dimensionality reduction, which is key goal of the cited methods. A smaller number of projection dimensions may preserve distances between the training points but it {\em does not make the task of discriminating outliers any easier.} It may be possible that for the classifier to have good accuracy the dimensionality of the projection space be {\em higher} than the data space. As such our problem space is different from that of the space of problems that look for dimensionality reduction through random projection.~\cite{kleinberg-stoc:1997} uses projections to a collection of randomly chosen vectors to take a simpler approach than locality sensitive hashing but his problem of interest is also approximate nearest neighbor search. Some research has also focused on using random projections to learn mixtures of Gaussians, starting with the work of~\cite{dasgupta-focs:1999} but again this is a different problem space.

The closest work to ours in flavor is that of~\cite{vempala-jacm:2010} who attempts to find $k$-half-spaces whose intersection fully contains a given positively labelled data set. The key difference here is that our methods work with the intersections of hyper-rectangles that are unbounded in one dimension and not half-spaces. While the solution in~\cite{vempala-jacm:2010} can be a general convex body in the projection space, ours is specifically limited to cuboids. As such it is not possible to refine Vempala's method by taking a parameter $\varepsilon$ and subdividing  intervals (c.f. Section~\ref{sec:methods}), nor is it clear how it could be regularized for use in real-world applications.
\section{Fast Random-Projection based OCC}
\label{sec:methods}

\begin{definition}[\frocc($S,m,\varepsilon, K$)]
    \label{def:eps-rpocc}
    Assume that we are given a set of training points $S = \{\bm{x}_1, \ldots, \bm{x}_n\} \subseteq \RR^d$. Then, given an integer parameter $m>0$, a real parameter $\varepsilon \in (0, 1]$ and a kernel function $K(\cdot, \cdot)$, the {\em $\varepsilon$-separated \underline{F}ast \underline{R}andom-projection based \underline{O}ne-\underline{C}lass \underline{C}lassifier}  \frocc$(S,m,\varepsilon, K)$, comprises, for each $i$, $1 \leq i \leq m$,
    \begin{enumerate}
        \item A {\em classifying direction} $\bm{w}_i$ that is a unit vector chosen uniformly at random from $\bm{1}_d$, the set of all unit vectors of $\RR^d$, independent of all other classifying directions, and
        \item a set of intervals $R_i$ defined as follows: Let $S'_i = \{K(\bm{w}_i,\bm{x}_j) : 1 \leq j \leq n\}$ and $S_i = \{y_1, \ldots, y_n\}$ is a shifted and scaled version of $S'_i$ such that $y_i \in (0, 1), 1 \leq i \leq n$. Assume $y_1 \leq \cdots \leq y_n$. Then each interval of $R_i$ has the property that it is of the form $[y_j,y_{j+k}]$ for some $j \geq 1, k\geq0$ such that
              \begin{enumerate}
                  \item for all $t$ such that $0 \leq t < k$, $y_{j+t+1} - y_{j+t} < \varepsilon$,
                  \item $y_j - y_{j-1} > \varepsilon$ whenever $j-1 > 0$, and
                  \item $y_{j+k+1} - y_{j+k} > \varepsilon$ whenever $j+k+1 \leq n$.
              \end{enumerate}
    \end{enumerate}

    Given a query point $\bm{y} \in \RR^d$, \frocc$(S,m,\varepsilon, K)$ returns YES if for every $i$, $1 \leq i \leq m$, $K(\bm{w}_i,\bm{y})$ lies within some interval of $R_i$.
\end{definition}

In the simplest setting the kernel function $K(\cdot, \cdot)$ will be just the usual dot product associated with $\RR^d$.

\begin{figure}[ht!]
    \centering
    \includegraphics[width=0.7\columnwidth]{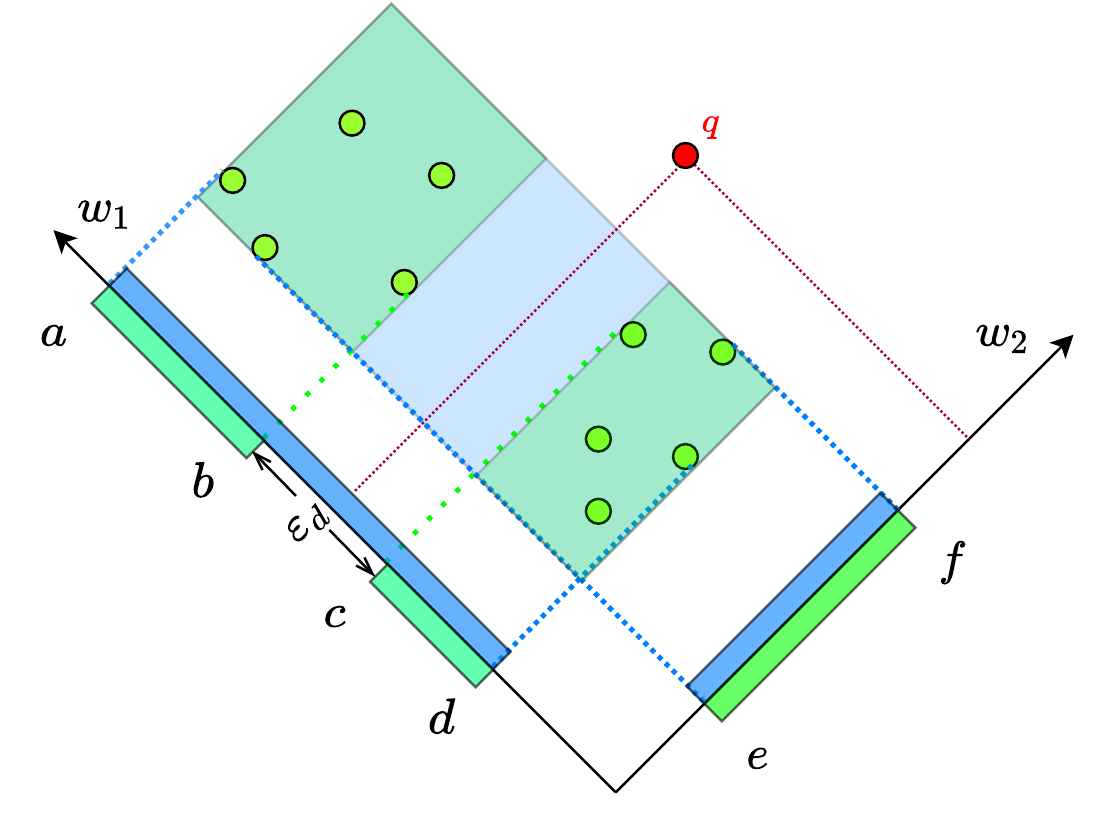}
    \caption{The test point $q$ is correctly classified as an outlier by $\bm{w_2}$ using \frocc since the projection lies outside the discriminating boundaries given by $e$ and $f$. However, $\bm{w_1}$ classifies $q$ incorrectly for $\varepsilon > \varepsilon_d$, since the discriminating boundary (blue region) is formed by the intervals $[a, d]$. If we chose  $\varepsilon \leq \varepsilon_d$, $\bm{w_1}$ correctly classifies $q$, since the discriminating boundaries (green regions) are given by the intervals $[a, b]$ and $[c, d]$.}
    \label{fig:frocc-example}
\end{figure}
We show a small example in Figure~\ref{fig:frocc-example}. The green points are training points. Two classifying directions $\bm{w}_1$ and $\bm{w}_2$ are displayed. The test point $\bm{q}$ has the property that $\langle \bm{q}, \bm{w}_1\rangle$ lies in $[a,d]$ but since $\langle \bm{q}, \bm{w}_2\rangle$ does not lie in $[e,f]$, $\bm{q}$ is said to be outside the class. Noting that the projections along $\bm{w}_1$ of the two clusters of the training set are well separated, a choice of $\varepsilon < \varepsilon_d$ leads to splitting of the interval $[a, d]$ into $[a, b]$ and $[c, d]$, thus correctly classifying $q$ to be an outlier.

In simple terms the intervals of $R_i$ have the property that the points of $S_i$ are densely scattered within each interval and there are wide gaps between the intervals that are empty of points of $S_i$. The density of the intervals is lower bounded by $1/\varepsilon$ and the width between successive intervals is lower bounded by $\varepsilon$. If we now look at Figure~\ref{fig:frocc-example} we note that since the interval $[b,c]$ has length more than $\varepsilon$, $R_1$ becomes $[a,b] \cup [c,d]$ and so the projection of $\bm{q}$ along $\bm{w}_1$ is also classified as being outside the class.

\paragraph{Computing \frocc.}
Algorithm~\ref{alg:frocc-train} outlines the training procedure of \frocc. The function $f$ in Line~\ref{line:f} is any valid kernel, which defaults to inner product \(\langle\cdot , \cdot\rangle\). Algorithm~\ref{alg:interval-tree} shows how the interval tree is created for each classification direction. Unlike many optimization based learning methods, such as neural networks, our algorithm needs a single pass through the data. %Note that if \(\varepsilon = 1\), the computation is greatly simplified and the resultant subroutine is presented in Algorithm~\ref{alg:minmax}.

\begin{algorithm}[ht!]
    \caption{\frocc Training}
    \label{alg:frocc-train}
    \SetAlgoLined
    \SetKwInOut{Input}{input}
    \SetKwInOut{Output}{output}

    \KwData{$S = \{\bm{x}_1, \ldots, \bm{x}_n\} \subseteq \RR^d$: training set}
    \Input{m, $\varepsilon$}
    \Output{$[R_i]_{i=1}^{m}$: list of m interval trees}
    \BlankLine
    Initialize $result\leftarrow$ \emph{empty list}\\
    \For{$i\leftarrow 1$ \KwTo $m$}{
        $\ell_i\leftarrow inf$\\
        $u_i\leftarrow 0$\\
        $\bm{w}_i\thicksim Uniform(\bm{1}_d)$ \tcp{sample a d-dimensional unit vector uniformly}\
        Initialize projectionList $\leftarrow$ \emph{empty list} \\
        \For{$j\leftarrow 1$ \KwTo $n$}{
            $projection\leftarrow f(x_j, w_i)$\label{line:f}\\
            projectionList.append(projection)\\
        }
        sort(projectionList)\\
        $R_i\leftarrow$ \emph{IntervalTree(projectionList, $\varepsilon$)} \\
        result.append($R_i$)
    }
\end{algorithm}

\begin{algorithm}[ht!]
    \caption{IntervalTree subroutine for $\varepsilon \in (0, 1)$}
    \label{alg:interval-tree}
    \SetAlgoLined
    \SetKwInOut{Input}{input}
    \SetKwInOut{Output}{output}

    \Input{projectionList, $\varepsilon$}
    \Output{R: interval tree}
    Initialize $R\leftarrow$ \emph{empty interval tree}\\
    Initialize $start\leftarrow projectionList[1]$, $end\leftarrow projectionList[1]$ \\
    margin = $(\max(projectionList) - \min(projectionList)) \cdot \varepsilon$ \\
    \For{$i\leftarrow 2$ \KwTo $n$}{
        \eIf{$projectionList[i] < end + margin$}{
            $end\leftarrow projectionList[i]$ \\
        }{
            \tcp{point is outside margin}
            R.insert([start, end]) \\
            $start\leftarrow projectionList[i]$ \\
            $end\leftarrow projectionList[i]$ \\
        }
    }
    \If{$start \neq end$}{
        \tcp{handle last interval}
        R.insert([start, end])
    }
\end{algorithm}

% \begin{algorithm}[htb]
%     \caption{IntervalTree subroutine for $\varepsilon = 1$}
%     \label{alg:minmax}
%     \SetAlgoLined
%     \SetKwInOut{Input}{input}
%     \SetKwInOut{Output}{output}

%     \Input{projectionList}
%     \Output{R: interval list}
%     $\ell\leftarrow \min(projectionList)$\\
%     $u\leftarrow \max(projectionList)$\\
%     R.insert($[\ell, u]$)
% \end{algorithm}

The standard method for choosing uniformly random vectors in $\RR^d$ is to sample from a spherical $d$-dimensional Gaussian centered at the origin and then normalizing it so that its length is 1. Folklore and a quick calculation suggests that independently picking $d$ random variables with distribution $N(0,1)$ is good enough for this purpose. The spherical symmetry of the Gaussian distribution ensures that the random vector is uniformly picked from all possible directions.

\paragraph{Time complexity.} The time taken to compute the boundaries of the $\frocc$ is $O(mnd)$ for $\varepsilon=1$ (assuming the kernel is dot product, which takes $O(d)$). For $\varepsilon < 1$ additional factor of $\theta(mn \log n)$ is required to create $\varepsilon$-separated intervals along each classifying direction.

\section{Theoretical Analysis}
\label{sec:theory}
In this section we state and prove certain properties of \frocc which states that \frocc is a stable classifier whose performance can be reliably tweaked by modifying the number of classification dimensions \(m\) and separation parameter \(\varepsilon\). In particular, we show that:
% \begin{inparaenum}
%     \item Bias of \frocc decreases exponentially with the size of the training set,
%     \item Precision and model size of \frocc varies monotonically with \(\varepsilon\) and \(m\), and
%     \item \frocc's error decay exponentially with \(m\)
% \end{inparaenum}
\begin{enumerate}
    \item Bias of \frocc decreases exponentially with the size of the training set if the training set is drawn from a spatially divisible distribution,
    \item Precision and model size of \frocc increase monotonically with \(m\) and decrease monotonically with \(\varepsilon\), and
    \item Probability of \frocc making an error decays exponentially with \(m\).
\end{enumerate}
\ifcutlevone
    % \subsection{Proof of stability of \frocc}
    \subsection{Stability of \frocc}
    \label{sec:stability}

    Following the analysis of~\cite{bousquet-jmlr:2002} we prove that \frocc generalizes in a certain sense. Specifically we will show that under some mild assumptions the bias of \frocc with $\varepsilon =1$ goes to 0 at an exponential rate as the size of the training set increases. We first show a general theorem which should be viewed as a variant of Theorem 17 of~\cite{bousquet-jmlr:2002} which is more suited to our situation since we address a one-class situation and our algorithm is randomized. Our Theorem~\ref{thm:rand-bousquet} is of general interest and may have applications beyond the analysis of \frocc.

    \subsubsection{Terminology and notation}
    Let $\mathcal{S}$ denote the set of all finite labelled point sets of $\RR^d$ where each point is labelled either 0 or 1. We assume we have a training set $S\in \mathcal{S}$ chosen randomly as follows: We are given an unknown distribution $D$ over $\RR^d \times \{0,1\}$ and $S$ comprises $k$ samples with label 1 drawn i.i.d. from $D$. For a point $z = (x,y) \in \RR^d \times \{0,1\}$ we use $\data(z)$ to denote $x$ and $\lab(z)$ to denote $y$.

    Let $\mathcal{R}$ be the set of all finite strings on some finite alphabet, and let us call the elements of $\mathcal{R}$ {\em decision strings}. Further, let $\mathcal{F}$ be the set of all {\em classification functions} from $\RR^d$ to $\{0,1\}$. Then we say a classification map $\Phi: \mathcal{S} \times \mathcal{R} \rightarrow \mathcal{F}$ maps a training set and a decision string to a classification algorithm

    With this setup we say that a {\em randomized classification algorithm} $A$ takes a training $S\in \mathcal{S}$ as input, picks a random decision string $r \in \mathcal{R}$ and returns the classification function $\Phi(S,r)$ which we denote $A(S,r)$. $A(S,r)$ is a randomly chosen element of $\mathcal{F}$ which we call a classifier. Given an $r$, $A(S,r)$ is fixed but we will use $A_S$ to denote the randomized classifier which has been given its training set $S$ but is yet to pick a random decision string. Now, given a $z\in \RR^d$, the {\em loss} function of a randomized classifier $A_S$ is given by
    \[V(A_S,z) = \exsub{r}{\ind{A(S,r)(\data(z)) \ne \lab(z)}},\]
    where the expectation is over the randomness of the decision string $r$. In other words the loss is equal to $ \prsub{r}{A(S,r)(\data(z)) \ne \lab(z)}$.

    We define the {\em risk} of $A_S$ as
    \[R(A_S) = \exsub{S,z}{V(A_S,z)},\]
    where the expectation is over the random choice of $S$ and of the point $z$. Note that $V(\cdot,\cdot)$ already contains an expectation on the randomness associated with the decision string which is inherent in $A_S$

    The {\em empirical risk} of $A_S$ is defined
    \[R_e(A_S) = \frac{1}{|S|} \sum_{z \in S} V(A_S,z).\]
    We introduce notation for two modifications of the training set $S$. For $1 \leq i \leq |S|$ we say that $S^{\setminus i} = S \setminus \{z_i\}$ and $S^i = S^{\setminus i} \cup \{z_i'\}$ where $z_i'$ is chosen from $D$ independently of all previous choices. We now define a notion of stability.

\fi

The following notion of stability is a modification of the definition of stability for classification algorithms given by~\cite{bousquet-jmlr:2002}.
\begin{definition}[0-1 uniform classification stability]
    Suppose we have a randomized classification algorithm $A$ and a loss function $V(\cdot,\cdot)$ whose co-domain is $[0,1]$. Then $A$ is said to have {\em 0-1 uniform classification stability} $(\beta,\eta)$ if for all $S \subseteq \RR^d$, for every $i$ such that $1 \leq i \leq |S|$, and for every $z \in \RR^d \times \{0,1\}$
    \[|V(A_S,z) - V(A_{S^{\setminus i}},z)| \leq \beta \]
    with probability at least $1 - \eta$.

    If $\beta$ is $O(1/|S|)$ and $\eta$ is $O(e^{-c|S|})$ for some constant $c > 0$, we say that $A$ is {\em 0-1 uniform classification stable.}
\end{definition}
We prove that a 0-1 uniform classification stable algorithm has low bias and converges exponentially fast in the size of $S$ to its expected behavior. This is a general result that may be applicable to a large class of randomized classification algorithm. We also prove that FROCC is 0-1 uniform classification stable under mild conditions on the unknown distribution.

\ifcutlevtwo
    We also introduce a condition that the unknown distribution $D$ has to satisfy in order for us to show the desired stability property.
    \begin{definition}[Spatial divisibility]
        Suppose that $\mu: \RR^d \rightarrow [0,1]$ is a probability density function. For any set $T$ containing $d$ points $x_i,\ldots, x_d \in \RR^d$ let $H_{T+}$ and $H_{T-}$ be the two half-spaces defined by the hyper-plane containing all the points of $T$. We say that $\mu$ is {\em spatially divisible} if for any set $d+1$ randomly chosen points $X = \{X_1,\ldots,X_d\}$ and $Y$ chosen independently according to density $\mu$ and any $A,B\subset \RR^d$ such that $\mu(A), \mu(B) > 0$,
        \begin{align*}
            \prob{Y \in H_{X+} | X_i \in A, Y \in B} & >0, \mathrm{and~} \\
            \prob{Y \in H_{X-} | X_i \in A, Y \in B} & >0.
        \end{align*}
    \end{definition}
    Note that the set $X$ defines a hyper-plane in $d$ dimensions with probability 1 for any distribution defined on a non-empty volume of $\RR^d$. Most standard distributions have the spatial divisibility property. For example, it is easy to see that a multi-variate Gaussian distribution has this property. If we pick a point uniformly at random from within a convex $d$-polytope then too the spatial divisibility property is satisfied. To see this we note that the only way it could not be satisfied is if the points $X_i$ and $Y$ are picked from the surface of the $d$-polytope which is an event of probability 0.

    \subsubsection{Results}
    With the definition of stability given above we can prove the following theorem which shows that the deviation of the empirical risk from the expected risk tends to 0 exponentially as $|S|$ increases for a class of randomized algorithms that includes \frocc.
    \begin{theorem}
        \label{thm:rand-bousquet}
        Suppose we have a randomized classification algorithm $A$ which has 0-1 uniform classification stability $(\beta, \eta)$ with $0 < \beta, \eta < 1$, and suppose this algorithm is trained on a set $S$ drawn i.i.d. from a hidden distribution in such a way that the random choices made by $A$ are independent of $S$, then, for any $\varepsilon > 0$,
        \begin{equation}
            \label{eq:bousquet}
            \prob{|R(A,S) - R_e(A,S)| > \varepsilon} \leq 2 \exp \left\{- \frac{2\varepsilon^2}{|S|\beta^2}\right\} + 2|S|\eta.
        \end{equation}
        Moreover if $A$ is 0-1 uniform classification stable then the RHS of \eqref{eq:bousquet} tends to 0 at a rate exponential in $|S|$.
    \end{theorem}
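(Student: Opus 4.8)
The plan is to adapt the stability-to-generalization argument of~\cite{bousquet-jmlr:2002} to our randomized, probabilistically-stable setting via the bounded differences (McDiarmid) inequality. Write $n = |S|$ and let $g(S) = R(A_S) - R_e(A_S)$ be the generalization gap; it suffices to bound $\prob{g(S) > \varepsilon}$ and $\prob{-g(S) > \varepsilon}$ and sum the two, and since the two are obtained by the same argument I will describe only the first. Because the internal randomness $r$ of $A$ is independent of the sample $S$ and the loss $V(A_S,\cdot)$ is already an average over $r$, we may regard $V$ as a fixed function of $(S,z)$ throughout; in this way the ``randomized'' aspect is dispatched by simply working with the $r$-averaged surrogate, and no further bookkeeping about $r$ is needed beyond this reduction.

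First I would control the bias $\exsub{S}{g(S)}$. Using the i.i.d.\ structure and the standard ``replace-one-point'' renaming of variables, $\exsub{S}{R_e(A_S)} = \exsub{}{V(A_S,z_i)}$ and $\exsub{S}{R(A_S)} = \exsub{}{V(A_{S^i},z_i)}$ where $S^i$ replaces $z_i$ by a fresh i.i.d.\ draw, so $\exsub{S}{g(S)} = \exsub{}{V(A_{S^i},z_i) - V(A_S,z_i)}$. Inserting the intermediate classifier $A_{S^{\setminus i}}$ and applying $0$-$1$ uniform classification stability twice (once for $S \to S^{\setminus i}$, once for $S^{\setminus i} \to S^i$), the integrand is at most $2\beta$ off an event of probability at most $2\eta$, and at most $1$ always; hence $\exsub{S}{g(S)} \le 2\beta + 2\eta$, which is lower order for a $0$-$1$ uniformly stable algorithm and can be absorbed (replacing $\varepsilon$ by $\varepsilon - o(1)$ in the final bound).

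Next I would establish a bounded-differences property for $g$. For each coordinate $i$, comparing $g(S)$ with $g(S^i)$ and bounding $|R(A_S) - R(A_{S^i})|$ and $|R_e(A_S) - R_e(A_{S^i})|$ by routing through $A_{S^{\setminus i}}$ and invoking stability (the $j=i$ term of the empirical average contributing an extra $1/n$), we get $|g(S) - g(S^i)| \le \beta$ except on a ``bad'' event $\mathcal{B}_i$ with $\prob{\mathcal{B}_i} = O(\eta)$. Setting $\mathcal{B} = \bigcup_{i=1}^n \mathcal{B}_i$, a union bound gives $\prob{\mathcal{B}} \le 2n\eta$. On $\mathcal{B}^c$ the function $g$ obeys the bounded-difference hypothesis with constants $\beta$, so a version of McDiarmid's inequality robust to a bad event (equivalently, conditioning on $\mathcal{B}^c$ and arguing the conditioning perturbs mean and concentration only negligibly) yields $\prob{\,|g(S) - \exsub{S}{g(S)}| > \varepsilon,\ \mathcal{B}^c} \le 2\exp\{-2\varepsilon^2/(n\beta^2)\}$. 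Combining this with $\prob{\mathcal{B}} \le 2n\eta$ and the bias bound from the previous paragraph gives inequality~\eqref{eq:bousquet}. Finally, for a $0$-$1$ uniformly stable $A$ we have $\beta = O(1/n)$ and $\eta = O(e^{-cn})$, so $n\beta^2 = O(1/n)$ makes the first term $e^{-\Omega(n\varepsilon^2)}$ and the second $O(n e^{-cn})$, both vanishing exponentially in $|S|$.

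I expect the main obstacle to be reconciling the McDiarmid machinery with the fact that the stability hypothesis only holds with probability $1-\eta$ rather than surely: the textbook inequality requires the bounded-difference condition on the whole sample space, so one must either condition on the good event $\mathcal{B}^c$ and show the conditioning costs only the additive $2n\eta$ and a negligible shift in the mean, or invoke a ``typical bounded differences''/bad-events-robust concentration inequality and verify its hypotheses. A secondary, purely bookkeeping, point is pinning the constants in the symmetrization identities and the per-coordinate difference bound so that the exponent comes out exactly $2\varepsilon^2/(n\beta^2)$ and the failure term exactly $2n\eta$.
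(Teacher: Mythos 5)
Your proposal takes essentially the same route as the paper's proof: McDiarmid's inequality applied to the replace-one-point differences, with the probabilistic stability handled by a union bound over the $|S|$ per-coordinate events (yielding the additive $2|S|\eta$) and conditioning on the resulting good event. You are in fact more explicit than the paper about the two genuine subtleties you flag --- the bias term $\ex{g(S)}$ and the legitimacy of invoking McDiarmid after conditioning, which the paper dispatches by asserting the good event depends only on the algorithm's internal randomness --- though your per-coordinate constant $\beta$ is optimistic (routing $S \to S^{\setminus i} \to S^i$ through stability twice, plus the changed $i$-th summand of $R_e$, gives roughly $4\beta + 1/|S|$ for $g = R - R_e$; the paper itself settles for $2\beta$), a constant-factor bookkeeping issue rather than a gap in the approach.
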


    \begin{proof} %[Proof of Theorem~\ref{thm:rand-bousquet}]
        As in~\cite{bousquet-jmlr:2002} we will use McDiarmid's inequality~\cite{mcdiarmid-sc:1989} to bound the deviation of the empirical risk from the risk. For completeness we state the result:
        \begin{lemma}[\cite{mcdiarmid-sc:1989}]
            \label{lem:mcdiarmid}
            Given a set of $m$ independent random variables $Y_i, 1 \leq i \leq m$ and a function $f$ such that $|f(x_1, \ldots, x_m) - f(y_1, \ldots, y_m)| < c_i$ whenever $x_j = y_j, 1 \leq j \leq m, j \ne i$, and $x_i \ne y_i$, then
            \[
                \prob{|\ex{f(Y_1, \ldots, Y_m)} - f(Y_1, \ldots, Y_m)|  > \varepsilon} \leq e^{\left(-\frac{2\varepsilon^2}{\sum_{i=1}^m c_i^2}\right)}.\]
        \end{lemma}
        To use Lemma~\ref{lem:mcdiarmid}  we have to show that $R_e(A,S)$ is a $2\beta$-Lipschitz function. However, this property is only true with a certain probability in our case. In particular we will show that for all $i$, $1 \leq i \leq |S|$, $|R_e(A,S) - R_e(A,S^i)|\leq 2\beta$ with probability at least $1 - 2|S|\eta$. To see this we note that for any $i$ and any $z_j \in S$ (including $j = i$).
        \begin{align*}
            |V(A_S,z_j) - V(A_{S^i},z_j)| = & |V(A_S,z_j) - V(A_{S^{\setminus i}},z_j)         \\
                                            & +  V(A_{S^{\setminus i}},z_j) - V(A_{S^i},z_j)|.
        \end{align*}
        Using the definition of $(\beta,\eta)$ 0-1 uniform classification stability and the triangle inequality, we get that the RHS is bounded by $2\beta$ with probability at least $1 - 2\eta$. If $A_j$ is the event that the RHS above is bounded by $2\beta$ then we want to bound the probability of the event that this is true for all $j$, $1 \leq j \leq |S|$, i.e., the event $F = \cap_{j=1}^{|S|} A_j$. Since $\prob{\overline{A_j}} \leq 2\eta$, we can say that $\prob{F}$ is at least $1 - 2|S|\eta$.

        Now, let $E$ be the event that $|R(A,S) - R_e(A,S)| > \varepsilon$ and $F$ defined above be the event that $R_e(A,S)$, which is a function of the vector of $|S|$ elements chosen independently to form $S$ is $2\beta$-Lipschitz. We know that
        \[\prob{E} = \prob{E\cap F} + \prob{E \cap \overline{F}},\]
        and so we can say that
        \begin{equation}
            \label{eq:conditioning}
            \prob{E} \leq  \prob{E | F}\prob{F} + \prob{\overline{F}}.
        \end{equation}
        We have already argued that the second term on the RHS is upper bounded by $2 |S|\eta$ so we turn to the first term. To apply Lemma~\ref{lem:mcdiarmid} to the first term we note that $E$ depends on the random selection of $S$ which are selected independently of each other and, by assumption, independent of the random choices made by $A$. Therefore the random collection $\{ V(A_S,z) : z \in S\}$ is independent even when conditioned on $F$ which is determined purely by the random choices of $A$ and is true for {\em every} choice of set $S$. Once this is noted then we can use McDiarmid's inequality to bound the first term of Equation~\eqref{eq:conditioning}. We ignore $\prob{F}$ by upper bounding it by 1.
    \end{proof}

    We now show that Theorem~\ref{thm:rand-bousquet} applies to \frocc with $\varepsilon = 1$ if the unknown distribution $D$ is spatially divisible.
    \begin{proposition}
        \label{prp:rpocc}
        If the unknown distribution $D$ is spatially divisible then \frocc with $\varepsilon = 1$ is 0-1 uniform classification stable. %and, consequently,
    \end{proposition}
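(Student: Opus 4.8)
The plan is to verify the definition of 0-1 uniform classification stability for $\text{FROCC}(S,m,1,K)$ directly, exhibiting a bound $\beta = O(1/|S|)$ that holds with probability $1 - \eta$ where $\eta = O(e^{-c|S|})$, using spatial divisibility of $D$ to control the failure event. Write $n = |S|$, fix an index $i$ and a point $z = (x,y)$, and compare the classifiers $A_S$ and $A_{S^{\setminus i}}$. Since $\varepsilon = 1$, each $R_j$ is a single interval $[\ell_j, u_j]$ spanning all projections along $\bm{w}_j$, so removing the point $z_i$ can only shrink some of these intervals (never grow them), and the classifier's YES-region only shrinks. Hence $A_S$ and $A_{S^{\setminus i}}$ disagree on $x$ along direction $\bm{w}_j$ only if $K(\bm{w}_j, x)$ happens to fall in the sliver $[\ell_j^{\setminus i}, \ell_j] \cup [u_j, u_j^{\setminus i}]$ that is removed when $z_i$ is dropped — and this sliver is nonempty for direction $\bm{w}_j$ only when $z_i$ is itself the extreme (min or max) projected point along $\bm{w}_j$.

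The key probabilistic step is therefore to bound, over the random choice of $\bm{w}_j$ and of $S$, the probability that a \emph{fixed} training point $z_i$ is extremal along $\bm{w}_j$; by symmetry among the $n$ i.i.d. points this is $2/n$ (it is the min with probability $1/n$ and the max with probability $1/n$, up to ties which occur with probability zero). Thus $\mathbb{E}_r[\mathbf{1}\{A_S \text{ and } A_{S^{\setminus i}} \text{ disagree on } x\}]$ is bounded, over all the randomness, by something like $2/n$ per direction — but I need this to hold as a high-probability statement about $|V(A_S,z) - V(A_{S^{\setminus i}},z)|$, not merely in expectation. The cleanest route: let $B_j$ be the event that $z_i$ is extremal along $\bm{w}_j$; the $\bm{w}_j$ are independent, so $\Pr[\bigcap_j \overline{B_j}] \geq (1 - 2/n)^m$, and on the complement $\bigcap_j \overline{B_j}$ the two loss values are \emph{equal}, giving $\beta$ effectively $0$. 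That is too strong and ignores the dependence on the shrinkage of \emph{other} points — so instead I would argue that conditioned on $z_i$ not being extremal in \emph{any} direction, $R_j^{\setminus i} = R_j$ exactly for every $j$, hence $A_S = A_{S^{\setminus i}}$ as functions, hence $V(A_S,z) = V(A_{S^{\setminus i}},z)$ for every $z$. Then we may take $\beta$ to be any positive constant and $\eta = 1 - (1 - 2/n)^m \leq 2m/n$, which is $O(1/n)$ for fixed $m$; combined with $\beta = O(1/n)$ (trivially, since $\beta$ can be taken as small as we like once the complement event forces equality — more carefully, $\beta$ should be chosen as, say, $1/n$ and $\eta$ as above).

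The place where spatial divisibility genuinely enters — and the main obstacle — is making the probability estimate $\Pr[z_i \text{ extremal along } \bm{w}_j] \leq 2/n$ rigorous, because this is a statement conditioned on the realized value $z_i = x_i$ (the loss-value comparison in the stability definition is pointwise in $z$ and pointwise in $S$, but the analysis above quietly averages over $S$). The definition of 0-1 uniform classification stability quantifies over \emph{all} $S \subseteq \RR^d$, so I must handle worst-case $S$, not random $S$. Re-examining: the randomness left in $A_S$ is only the decision string, i.e. the directions $\bm{w}_1,\dots,\bm{w}_m$; so for a \emph{fixed} $S$ and fixed $i$, I need $\Pr_{\bm{w}}[z_i \text{ extremal along some } \bm{w}_j]$ to be small. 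For a fixed finite point set this probability is \emph{not} $O(1/n)$ in general — e.g. if $x_i$ is a vertex of the convex hull it is extremal for a positive-measure cone of directions. This is exactly where spatial divisibility of $D$ must be invoked: the stability statement we actually prove must be a high-probability statement over $S \sim D^n$, i.e. the proposition should be read as FROCC being 0-1 uniform classification stable \emph{for $D$-typical training sets}, and spatial divisibility guarantees that with probability $1 - e^{-\Omega(n)}$ no point sits on the convex hull in a way that makes it extremal for more than an $O(1/n)$-fraction of directions. So the real work is: (i) show that for $S \sim D^n$, with probability $1 - e^{-\Omega(n)}$, every point $x_i$ is "deep" in the sense that the measure of directions along which it is extremal is $O(1/n)$ — here a point is extremal along $\bm{w}$ iff it maximizes or minimizes $\langle \bm{w}, \cdot\rangle$, and spatial divisibility (applied to $d$-subsets of $S$ defining separating hyperplanes) forces this fraction down; (ii) take a union bound over the $n$ points and the $m$ directions; (iii) conclude $\eta = O(m/n) + e^{-\Omega(n)} = O(e^{-cn})$? — no, $O(m/n)$ is polynomial, so I must instead absorb the "extremal fraction" estimate into $\beta$: set $\beta = O(m/n)$ as the per-direction disagreement probability summed over directions (giving the loss difference bound), and let $\eta = e^{-\Omega(n)}$ be solely the spatial-divisibility failure probability. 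I expect the delicate part to be the quantitative claim in (i) — translating the qualitative positivity in the spatial divisibility definition into a quantitative $O(1/n)$ bound on the extremal-direction measure, which likely needs a packing/epsilon-net argument on the sphere together with an averaging argument showing each of the $n$ points is extremal for directions of total measure summing to $1$, hence $O(1/n)$ on average, and then spatial divisibility used to rule out the heavy-tail case with exponentially small failure probability.
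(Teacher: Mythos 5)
You have correctly identified the geometric core of the argument: with $\varepsilon=1$ each $R_j$ is a single interval, so $A_S$ and $A_{S^{\setminus i}}$ can differ only when $x_i$ is the extremal (minimum or maximum) projection along some classifying direction, and this happens for a positive measure of directions exactly when $x_i$ is a vertex of $\mathrm{conv}(S)$. This matches the first half of the paper's proof, and your observation that the stability bound must really be read over a random $S\sim D^n$ rather than worst-case $S$ is also consistent with what the paper actually does. The gap is in the probabilistic half, which is precisely where spatial divisibility has to do its work. Your symmetry argument yields only $\eta \leq 2m/n$, and your proposed repair --- an averaging/epsilon-net bound showing each point is extremal for an $O(1/n)$-measure of directions --- again gives at best a polynomially small failure probability, whereas the definition of 0-1 uniform classification stability demands $\eta = O(e^{-c|S|})$. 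You flag this yourself but do not resolve it.

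The paper closes this gap with a different and sharper estimate. It bounds $|V(A_S,z)-V(A_{S^{\setminus i}},z)|$ by the probability, over the draw of $S$, that $x_i$ lies on the convex hull, and controls that probability via the R\'enyi--Sulanke/Efron identity: for any $d$-subset $Y$ of the sample, the probability that the hyperplane through $Y$ supports a face of the hull equals $D(A_{Y+})^{n-d}+D(A_{Y-})^{n-d}$, where $A_{Y\pm}$ are the two half-spaces it determines. A union bound over the $\binom{n-1}{d}$ candidate subsets, together with spatial divisibility (which guarantees $D(A_{Y\pm})<1$ and hence a uniform $\alpha<1$), gives a bound of the form $C(n-1)^d\alpha^n$, which decays exponentially in $n$. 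So the role of spatial divisibility is not to control the measure of extremal directions of a typical point, but to make the event of $x_i$ being a hull vertex at all exponentially unlikely; once $x_i$ is interior, the set of directions along which it is extremal has measure zero and the two classifiers coincide identically, so one can take $\beta=0$ on the good event. Your proposal is missing exactly this lemma, and without it the exponential rate required by the definition cannot be reached.
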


    \begin{proof} %[Proof of Proposition~\ref{prp:rpocc}]
        For any $i$, $1 \leq i \leq n$, where $n = |S|$,  we note that when $\varepsilon =1$ the entire interval spanned by the projections in any direction is said to contain inliers, and so if $x_i$ lies strictly within the convex hull of $S$ then the removal of $i$ from $S$ does not affect the classifier. This is because the boundaries in any direction are determined by the points that are part of the convex hull. Let us denote this set $\mbox{conv}(S)$. Therefore, since $V(\cdot,\cdot)$ takes value at most 1, we deduce the following upper bound
        \[|V(A_S,z) - V(A_{S^{\setminus i}},z)|\leq \prob{x_i \in \mbox{conv}(S)}.\]
        To bound the probability that a point $x_i$ lies in the convex hull of $S$ we use an argument that Efron~\citep{efron-biometrika:1965} attributes to R\'enyi and Sulanke~\citep{renyi-z-wahr:1963,renyi-z-wahr:1964}. Given a region $B \subset \RR^d$ let us denote by $D(B)$ the probability that a point drawn from the unknown distribution $D$ places a point in $B$. Now given any $d$ points, say $x_1, \ldots, x_d \in S$, we divide $\RR^d$ into two regions $A_{X+}$ that lies on one side of the hyper-plane defined by $x_1, \ldots, x_d$ and $A_{X-}$ that lies on the other side. Then the probability that $x_1, \ldots, x_d$ all lie in $\mbox{conv}(S)$ is equal to the probability that all the remaining points of $S$ are either in $A_{X+}$ or in $A_{X-}$, i.e.,
        \[\prob{x_1, \ldots, x_d \in \mbox{conv}(S)} = D(A_{X+})^{(n-d)} + D(A_{X-})^{(n-d)}.\]
        From this, using the union bound we can say that%
        \[\prob{x_i \in \mbox{conv}(S)} \leq \sum_{Y \in \mathcal{P}(S\setminus \{x_i\}, d)}D(A_{Y+})^{(n-d)} + D(A_{Y-})^{(n-d)},\]%
        where $\mathcal{P}(A, d)$ is the set of all subsets of $A$ of size $d$. This can further be bounded as%
        \[ LHS \leq 2\binom{n-1}{d} \max_{Y \in \mathcal{P}(S\setminus \{x_i\}, d)} \max\{D(A_{Y+})^{(n-d)}, D(A_{Y-})^{(n-d)}\}.\]%
        Since we have assumed that $D(A_{Y\pm})<1$ for all $Y\subset S$ such that $|Y|=d$ and $x_i \notin Y$, using the fact that $\binom{n}{k} \leq (en/k)^k$, we can say that there is an $\alpha$ such that $0<\alpha < 1$ and a constant $C > 0$ such that
        \[|V(A_S,z) - V(A_{S^{\setminus i}},z)|\leq C\cdot (n-1)^d \alpha^n\]
        almost surely. Since the RHS goes to 0 exponentially fast in $n$ with probability 1, we can say that \frocc is 0-1 uniform classification stable when $\varepsilon =1$.
    \end{proof}

    From Theorem~\ref{thm:rand-bousquet} and Proposition~\ref{prp:rpocc} we get
    \begin{theorem}
        \label{thm:rpocc-bias}
        For a training set $S$ chosen i.i.d. from a spatially divisible unknown distribution, if $\varepsilon$ is set to 1 then $R_e(\frocc,S)$ converges in probability to $R(\frocc, S)$ exponentially fast in $|S|$.
    \end{theorem}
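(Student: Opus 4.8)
The plan is to obtain Theorem~\ref{thm:rpocc-bias} as an immediate corollary of Theorem~\ref{thm:rand-bousquet} and Proposition~\ref{prp:rpocc}: essentially all the work has already been done, so the task reduces to checking that the hypotheses line up and to tracking asymptotics. First I would verify that \frocc with $\varepsilon = 1$ satisfies the premises of Theorem~\ref{thm:rand-bousquet}. It is a randomized classification algorithm whose only randomness is the choice of the $m$ classifying directions $\bm{w}_1,\ldots,\bm{w}_m$, which are sampled uniformly from $\bm{1}_d$ \emph{independently of the training set} $S$; the set $S$ is, by hypothesis, drawn i.i.d.\ from the hidden distribution $D$; and the loss $V(\cdot,\cdot)$ has co-domain $[0,1]$, being an expectation of a $\{0,1\}$-valued indicator. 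Hence both the independence-of-$S$ condition and the $[0,1]$-codomain condition hold.

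Next I would invoke Proposition~\ref{prp:rpocc}: since $D$ is spatially divisible, \frocc with $\varepsilon = 1$ is 0-1 uniform classification stable. Unpacking the definition, this supplies a function $\beta(n)$ that is $O(1/n)$ and a function $\eta(n)$ that is $O(e^{-cn})$ for some constant $c>0$, such that, writing $n = |S|$, for every $S$, every $i \le n$, and every $z$ we have $|V(A_S,z) - V(A_{S^{\setminus i}},z)| \le \beta(n)$ with probability at least $1-\eta(n)$. (The proof of that proposition in fact gives the far stronger $\beta(n) = C(n-1)^d\alpha^n$ with $0<\alpha<1$, valid almost surely, which is more than we need here.)

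Then I would substitute $\beta = \beta(n)$ and $\eta = \eta(n)$ into the conclusion of Theorem~\ref{thm:rand-bousquet}. To avoid overloading symbols I would rename that theorem's accuracy parameter to $\delta$; it is unrelated to the separation parameter $\varepsilon = 1$ of \frocc. This yields, for every fixed $\delta > 0$,
\[ \prob{|R(\frocc,S) - R_e(\frocc,S)| > \delta} \;\le\; 2\exp\!\left\{-\frac{2\delta^2}{n\,\beta(n)^2}\right\} + 2n\,\eta(n). \]
Since $\beta(n) \le b/n$ for a constant $b$, we get $n\,\beta(n)^2 \le b^2/n$, so the first term is at most $2\exp\{-2\delta^2 n/b^2\}$, which decays exponentially in $|S|$; the second term is $2n\cdot O(e^{-cn})$, a polynomial times an exponential, hence also exponentially decaying in $|S|$. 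Therefore the right-hand side goes to $0$ at a rate exponential in $|S|$, which is exactly the asserted exponential-rate convergence in probability of $R_e(\frocc,S)$ to $R(\frocc,S)$.

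There is no genuine obstacle remaining — the substance is in the McDiarmid-with-conditioning argument of Theorem~\ref{thm:rand-bousquet} and the R\'enyi--Sulanke convex-hull estimate inside Proposition~\ref{prp:rpocc}. The only points requiring care are bookkeeping: (i) reconciling the ``with probability at least $1-\eta$'' clause of the stability definition with the ``almost surely'' phrasing of Proposition~\ref{prp:rpocc}, which is harmless because the measure-zero exceptional event there is absorbed by any exponentially small $\eta(n)$; and (ii) observing that it is exactly the bound $\beta(n) = O(1/n)$ that makes the McDiarmid exponent $2\delta^2/(n\beta(n)^2)$ grow linearly in $|S|$ rather than vanish, which is the mechanism behind the exponential rate.
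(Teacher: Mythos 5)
Your proposal is correct and follows exactly the route the paper takes: Theorem~\ref{thm:rpocc-bias} is stated there as an immediate consequence of combining Theorem~\ref{thm:rand-bousquet} with Proposition~\ref{prp:rpocc}, with the exponential rate already built into the final clause of Theorem~\ref{thm:rand-bousquet}. Your explicit substitution of $\beta(n)=O(1/n)$ and $\eta(n)=O(e^{-cn})$ into the bound, and the check that both terms decay exponentially, is precisely the bookkeeping the paper leaves implicit.
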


    We feel that it should be possible to use the framework provided by Theorem~\ref{thm:rand-bousquet} to show that \frocc is stable even when $\varepsilon$ lies in $(0,1)$, however this problem remains open.
\fi

\subsection{Monotonicity properties of \frocc}
We now show that the precision and the model size of \frocc is monotonic in the two parameters $m$ and $\varepsilon$ but in opposite directions. The precision and model size increase as $m$ increases and decrease as $\varepsilon$ increases. This result is formalized as Proposition~\ref{prp:varying} below and follows by a coupling argument.

We first introduce some terms. On the $\sigma$-algebra induced by the Borel sets of $\cup_{i\geq 1} \RR^i$ we define the probability measure $\Psub{m,\varepsilon}{\cdot}$ induced by \frocc with parameters $\varepsilon, m > 0$. Note that $\varepsilon, m > 0$, \frocc is a random collection of $m$-dimensional cubes in $\RR^m$. These cubes are induced by taking the products of intervals in each of the $m$ dimensions. Each dimension has between 1 and $|S|$ intervals when trained on $S$. We define the {\em model size} of \frocc to be the sum of the number of intervals, i.e., if there are $k_i$ intervals along dimension $i$, $1 \leq i \leq m$, the model size is $\sum_{i=1}^m 2k_i$. We will use the letter $M_{m,\varepsilon}$ to denote the model size of \frocc with parameters $m, \varepsilon$. Note that this is a random variable.

\begin{proposition}
    \label{prp:varying}
    Given \frocc trained on a finite set $S \subset \RR^d$, for $\varepsilon,  \varepsilon' > 0$, $m,m' > 0$ and any $x \in \RR^d$ we have that
    \begin{enumerate}
        \item whenever $\varepsilon < \varepsilon'$
              \begin{enumerate}
                  \item $\Psub{m,\varepsilon}{x \mbox{ is classifed YES}} \leq \Psub{m,\varepsilon'}{x \mbox{ is classifed YES}}$,
                  \item $\E{m,\varepsilon}{M_{m,\varepsilon}} \geq \E{m,\varepsilon'}{M_{m,\varepsilon'}}$, and,
              \end{enumerate}
        \item whenever $m < m'$
              \begin{enumerate}
                  \item $\Psub{m,\varepsilon}{x \mbox{ is classifed YES}} \leq \Psub{m',\varepsilon}{x \mbox{ is classifed YES}}$,
                  \item $\E{m,\varepsilon}{M_{m,\varepsilon}} \geq \E{m',\varepsilon}{M_{m',\varepsilon}}$.
              \end{enumerate}
    \end{enumerate}
\end{proposition}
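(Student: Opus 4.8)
The plan is to prove all four inequalities through a single coupling idea. Since the proposition fixes the training set $S$ and the only randomness is the choice of classifying directions, I would realize the two classifiers being compared on a common probability space so that they share as much of this randomness as possible, and then compare them realization by realization. Throughout, write $\Lambda_i^{\varepsilon}$ for the union of the intervals produced along direction $\bm{w}_i$ at separation $\varepsilon$, and recall that \frocc answers YES at a point $x$ exactly when $K(\bm{w}_i,x)\in\Lambda_i^{\varepsilon}$ for every $i$, while the model size is $\sum_i 2k_i$ with $k_i$ the number of intervals along $\bm{w}_i$.

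For the monotonicity in $\varepsilon$ (items 1(a) and 1(b)) I would couple the two classifiers to use the identical $m$ random directions $\bm{w}_1,\dots,\bm{w}_m$, so that only the interval-formation step differs. The key lemma is that for a fixed sorted projection sequence $y_1\le\cdots\le y_n$ along a single direction, the interval family at the smaller separation $\varepsilon$ refines the family at the larger separation $\varepsilon'$: an adjacent pair is split precisely when its gap exceeds the threshold, and this threshold is monotone increasing in the separation parameter, so every gap that triggers a split at $\varepsilon'$ also triggers one at $\varepsilon$. Consequently each interval of $R_i$ at $\varepsilon'$ is subdivided at data points into one or more intervals at $\varepsilon$, which yields two facts at once: the covered set shrinks, $\Lambda_i^{\varepsilon}\subseteq\Lambda_i^{\varepsilon'}$, and the interval count cannot decrease, $k_i^{\varepsilon}\ge k_i^{\varepsilon'}$. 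Intersecting the coverage inclusion over all $i$ shows that the YES event at $\varepsilon$ is contained in the YES event at $\varepsilon'$ on every realization of the shared directions, which gives $\Psub{m,\varepsilon}{x \text{ is classified YES}}\le\Psub{m,\varepsilon'}{x \text{ is classified YES}}$ after taking probabilities; summing the count inequality over $i$ gives $M_{m,\varepsilon}\ge M_{m,\varepsilon'}$ pointwise, hence $\E{m,\varepsilon}{M_{m,\varepsilon}}\ge\E{m,\varepsilon'}{M_{m,\varepsilon'}}$ after taking expectations, establishing items 1(a) and 1(b).

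For the monotonicity in $m$ (items 2(a) and 2(b)) I would use the analogous coupling that shares randomness across the two direction counts: realize the larger classifier on directions $\bm{w}_1,\dots,\bm{w}_{m'}$ and the smaller one on the first $m$ of these, drawing the extra $m'-m$ directions independently. Because both the YES decision and the model size factorize over directions — the decision as a conjunction of the per-direction membership events and the size as the sum $\sum_i 2k_i$ — the per-direction comparison assembled along this shared family delivers the stated monotonicity of $\Psub{\cdot,\varepsilon}{x \text{ is classified YES}}$ and of $\E{\cdot,\varepsilon}{M_{\cdot,\varepsilon}}$ in $m$, i.e. items 2(a) and 2(b).

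I expect the main obstacle to be the refinement lemma underpinning part 1, and in particular reconciling the absolute gap test of Definition~\ref{def:eps-rpocc} with the relative margin $(\max-\min)\cdot\varepsilon$ actually used in Algorithm~\ref{alg:interval-tree}. I would argue that, along any single direction, the per-direction $\max$ and $\min$ are determined by the fixed set $S$ and are independent of $\varepsilon$, so the effective splitting threshold is a fixed positive multiple of $\varepsilon$ and therefore still strictly increasing in $\varepsilon$; with that in hand the nesting $\Lambda_i^{\varepsilon}\subseteq\Lambda_i^{\varepsilon'}$ and the count comparison $k_i^{\varepsilon}\ge k_i^{\varepsilon'}$ follow from the same gap-by-gap accounting. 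The remaining steps — intersecting events, summing counts, and passing to probabilities and expectations under the coupling — are routine once the per-direction monotonicity is in place.
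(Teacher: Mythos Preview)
Your approach is essentially the paper's: both proofs couple the two classifiers on common random directions, use the nesting $\Lambda_i^{\varepsilon}\subseteq\Lambda_i^{\varepsilon'}$ (hence $k_i^{\varepsilon}\ge k_i^{\varepsilon'}$) for part~1, and embed the $m$-direction model as the first $m$ coordinates of the $m'$-direction model for part~2. Your extra paragraph reconciling Definition~\ref{def:eps-rpocc} with the relative margin in Algorithm~\ref{alg:interval-tree} is a nice addition the paper omits.

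One caution on part~2: do not hide behind the phrase ``delivers the stated monotonicity.'' The coupling you (and the paper) set up actually shows that any $x$ classified YES by the $m'$-direction model is classified YES by the $m$-direction model, and that $M_{m',\varepsilon}\ge M_{m,\varepsilon}$ pointwise; this gives $\Psub{m,\varepsilon}{x\text{ YES}}\ge\Psub{m',\varepsilon}{x\text{ YES}}$ and $\E{m,\varepsilon}{M_{m,\varepsilon}}\le\E{m',\varepsilon}{M_{m',\varepsilon}}$, which is the direction the paper's proof text derives and the direction consistent with the surrounding prose (``precision and model size increase as $m$ increases''), but is the \emph{reverse} of what items 2(a) and 2(b) literally say. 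Write out the inequalities you obtain explicitly rather than deferring to the printed statement.
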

\begin{proof}
    For 1(a) and 1(b) we couple the probability distributions $\Psub{m,\varepsilon}{\cdot}$ and $\Psub{m,\varepsilon'}{\cdot}$ by choosing $m$ random unit vectors, $\vec{w}_1, \ldots, \vec{w}_m$, and then constructing one instance of \frocc with separation \(\varepsilon\) and one instance of \frocc with separation $\varepsilon'$ using the same $m$ random unit vectors.

    Now, consider some vector $\vec{w}_i$ and let the $x_j^i, 1 \leq i \leq |S|$ be the projections of the training points on $\vec{w}_i$. Let the intervals formed by $\varepsilon$ separated \frocc be $\{[a_{ij}, b_{ij}] : 1 \leq j \leq k_i\}$ and those of $\varepsilon'$ separated \frocc be $\{[a'_{ij}, b'_{ij}] : 1 \leq j \leq k'_i\}$. We know that $a'_{i1} = a_{i1} = \min x_j^i$. Clearly each $[a_{ij},b_{ij}]$ is contained within some $ [a'_{ij'},b'_{ij'}]$ since the distance between subsequent points within $[a_{ij},b_{ij}]$ is at most $\varepsilon$ which is strictly less than $\varepsilon'$. This proves 1(a) and also implies that $k_i' \leq k_i$ which proves 1(b).

    For 2(a) and 2(b) we couple the probability distributions $\Psub{m,\varepsilon}{\cdot}$ and $\Psub{m',\varepsilon}{\cdot}$ by choosing $m$ random unit vectors, $\vec{w}_1, \ldots, \vec{w}_m$, and then constructing two instances of \frocc. Then we choose an additional $m' - m$ vectors $\vec{w}_{m+1}, \ldots, \vec{w}_{m'}$ and compute intervals corresponding to these and add them to the second instance of \frocc. 2(b) follows immediately. For 2(a) we note that any $x \in \RR^d$ classified as YES by the second model must also be classified as YES by the first model since it is classified as YES along the first $m$ dimensions of the second model which are common to both models.
\end{proof}

From Proposition~\ref{prp:varying} it follows that expected model size and false positive probability vary inversely, i.e., the larger the model size the smaller the false positive probability and vice versa.

Proposition~\ref{prp:varying} is also consistent with the following upper bound that is easy to prove
\begin{fact}
    For \frocc constructed on training set $S = \{\vec{x}_1, \ldots, \vec{x}_n\}$,
    $$2m \leq M_{m,\varepsilon} \leq m \cdot \max\left\{2, \frac{\delta_{\max}}{\varepsilon}\right\}$$
    with probability 1,
    where $\delta_{\max} = \max_{1 \leq i,j \leq n} \|\vec{x}_i - \vec{x}_j\|_2$.
\end{fact}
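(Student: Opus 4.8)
The plan is to reduce the statement to a per-direction count. By the definition of model size, $M_{m,\varepsilon}=\sum_{i=1}^{m}2k_i$, where $k_i$ is the number of intervals of $R_i$ produced along the $i$-th classifying direction $\vec{w}_i$; so it suffices to show $1\le k_i\le\max\{1,\delta_{\max}/(2\varepsilon)\}$ for every $i$, and then sum over $i$. Since this bound will hold for \emph{every} realisation of $\vec{w}_1,\dots,\vec{w}_m$ (we never use anything about their distribution), the ``with probability $1$'' qualifier is automatic.

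For the lower bound I would argue that the set of projections $\{\langle\vec{w}_i,\vec{x}_j\rangle:1\le j\le n\}$ is nonempty, so after sorting, $y_1$ starts at least one interval of $R_i$ regardless of $\varepsilon$; hence $k_i\ge 1$ and $M_{m,\varepsilon}\ge 2m$. For the upper bound, fix $\vec{w}_i$, write $p_1\le\cdots\le p_n$ for the sorted projections, and split into two cases. If every consecutive gap $p_{j+1}-p_j$ is too small to trigger a new interval, then $k_i=1$ and $2k_i=2$. Otherwise $k_i\ge 2$ and $R_i$ consists of $k_i$ intervals separated by $k_i-1$ ``separating gaps'', each of length exceeding $\varepsilon$ by conditions (b)--(c) of Definition~\ref{def:eps-rpocc} (equivalently, the margin test in Algorithm~\ref{alg:interval-tree}). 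These $k_i-1$ gaps are pairwise disjoint subintervals of $[p_1,p_n]$, and since $\vec{w}_i$ is a unit vector the projection map is $1$-Lipschitz, so $p_n-p_1=|\langle\vec{w}_i,\vec{x}_a-\vec{x}_b\rangle|\le\|\vec{x}_a-\vec{x}_b\|_2\le\delta_{\max}$ for the extreme pair $a,b$. Consequently $(k_i-1)\varepsilon<\delta_{\max}$, which bounds $k_i$ in terms of $\delta_{\max}/\varepsilon$; combining this with the $k_i=1$ case gives $2k_i\le\max\{2,\delta_{\max}/\varepsilon\}$, and summing over $i$ finishes the proof.

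The step I expect to need the most care is reconciling the normalisation with the unscaled quantity $\delta_{\max}$ that appears in the statement: Definition~\ref{def:eps-rpocc} affinely rescales the raw projections $S'_i$ into $(0,1)$ before the $\varepsilon$-gap conditions are imposed, and Algorithm~\ref{alg:interval-tree} instead uses a margin proportional to the per-direction spread, so one must pin down which convention is in force and check that the disjoint-gap count still translates into a $\delta_{\max}/\varepsilon$ bound with the stated constant. The only other subtlety is the degenerate direction along which the projected diameter is smaller than $\varepsilon$ (so that no separating gap can form): this forces $k_i=1$ and is precisely what makes the constant $2$ appear inside the maximum. Everything else is a short disjoint-intervals counting argument plus the $1$-Lipschitz property of projection onto a unit vector.
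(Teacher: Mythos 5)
Your approach is the paper's approach: the paper's entire proof of this Fact is the two sentences that follow it, namely that the first inequality is trivial and the second follows because the gaps between intervals along any direction exceed $\varepsilon$ while the projected range is at most $\delta_{\max}$. Your write-up makes explicit everything the paper leaves implicit --- the reduction to the per-direction count $M_{m,\varepsilon}=\sum_{i=1}^m 2k_i$, the $1$-Lipschitz property of projection onto a unit vector giving $p_n-p_1\le\delta_{\max}$, and the disjoint-gap count $(k_i-1)\varepsilon<\delta_{\max}$ --- and all of that is sound.

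The one genuine gap is your final inference, and you were right to be suspicious of it. From $(k_i-1)\varepsilon<\delta_{\max}$ you get $k_i<1+\delta_{\max}/\varepsilon$ and hence $2k_i<2+2\delta_{\max}/\varepsilon$; this does \emph{not} yield $2k_i\le\max\{2,\delta_{\max}/\varepsilon\}$, which would require the roughly twice-as-strong bound $k_i\le\delta_{\max}/(2\varepsilon)$ whenever $k_i\ge 2$. Nothing in the construction supplies the missing factor: an interval of $R_i$ may be a single projection point (the case $k=0$ in Definition~\ref{def:eps-rpocc}), so intervals pay no length of their own, and two training points whose projections along some direction differ by slightly more than $\varepsilon$ while $\delta_{\max}<4\varepsilon$ give $k_i=2$ and $2k_i=4>\max\{2,\delta_{\max}/\varepsilon\}$, an event of positive probability. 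What the gap-counting argument actually proves is $2m\le M_{m,\varepsilon}\le 2m\left(1+\delta_{\max}/\varepsilon\right)$. The paper's own two-line proof has exactly the same slack, so you have reproduced its argument faithfully, weakness included; and your side remark about normalization (Definition~\ref{def:eps-rpocc} rescales projections into $(0,1)$ and Algorithm~\ref{alg:interval-tree} uses a margin proportional to the per-direction spread, under either of which the natural bound would involve $1/\varepsilon$ rather than $\delta_{\max}/\varepsilon$) identifies a second real inconsistency in the statement rather than a defect of your proof.
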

The first inequality is trivial. The second follows by observing that along any dimension the gaps between intervals must be at least $\varepsilon$ and the length of the interval between the maximum and minimum projection is at mode $\delta_{\max}$.

\ifcutlevtwo
    \subsection{\frocc's error decays exponentially with \texorpdfstring{\(m\)}{m}}
    \begin{definition}
        Given a finite set of points $S = \{\bm{x}_1, \ldots, \bm{x}_n\} \subseteq \RR^d$ and any $\bm{w} \in \bm{1}_d$, we say that
        $\ell_{\bm{w}} = \min_{j=1}^n \langle \bm{x}_j, \bm{w}\rangle,$
        and
        $u_{\bm{w}} = \max_{j=1}^n \langle \bm{x}_j, \bm{w}\rangle$.
        Now, given a point $\bm{y} \subset \RR^d$ and $\bm{w} \in \bm{1}_d$, we say that the distance of $\bm{y}$ from $S$ {\em along the direction} $\bm{w}$ is defined as
        \[\alpha(\bm{y},S, \bm{w}) =
            \left\{ \begin{array}{ll}
                \ell_{\bm{w}} - \langle \bm{y}, \bm{w}\rangle & \mbox{if } \langle \bm{y}, \bm{w}\rangle < \ell_{\bm{w}}                     \\
                0                                             & \mbox{if } \ell_{\bm{w}} \leq  \langle \bm{y}, \bm{w}\rangle \leq u_{\bm{w}} \\
                \langle \bm{y}, \bm{w}\rangle - u_{\bm{w}}    & \mbox{o.w.}
            \end{array}
            \right.
        \]
        Further we say that $\bm{w} \in \bm{1}_d$ is a {\em discriminator} of $\bm{y}$ from $S$ if $\alpha(\bm{y}, S) >0$. Finally we denote by $C(\bm{y},S)$ the subset of unit vectors of $\bm{1}_d$ that discriminate $\bm{y}$ from $S$.
    \end{definition}
\fi

FROCC has the property that the probability of misclassifying a point outside the class decreases exponentially with parameter $m$. We state this formally. Given $A \subset \bm{1}_d$ we denote the measure of $A$ by $\mu(A)$.
%Note that $\mu(\bm{1}_d)$ is the surface area of the unit sphere in $d$-dimensions and so
%\[\mu(\bm{1}_d) = \frac{2 \pi^{d/2}}{\Gamma(n/2)},\]
%where $\Gamma(x)$ is the gamma function, which is $x!$ if $x$ is a positive integer.

\begin{proposition}
    \label{prp:probability}
    Given a set $S$ and a point $\bm{y}$ with a set $C(\bm{y}, S)$ of non-empty measure such that for any $\bm{w} \in C(\bm{y},S)$, $\langle\bm{y}, \bm{w}\rangle \notin [\min_{j=1}^n \langle \bm{x}_j, \bm{w}\rangle,\max_{j=1}^n \langle \bm{x}_j, \bm{w}\rangle]$, the probability that FROCC projection dimension $m$ returns a Yes answer is
    \[\prob{\bm{y}, S, m} = \left(1 - \frac{\mu(C(\bm{y},S))}{\mu(\bm{1}_d)} \right)^m.\]
    \ifcutlevone
        Hence for the classifier to given an erroneous Yes answer on $\bm{y}$ with probability at most $\delta$ we need
        \begin{equation}
            \label{eq:vanilla-classifier}
            m \geq \frac{\mu(\bm{1}_d)}{\mu(C(\bm{y},S))} \log \frac{1}{\delta}
        \end{equation}
        projection dimensions.
    \fi
\end{proposition}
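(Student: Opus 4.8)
First I would unwind the definition of \frocc to see that the YES/NO decision factorizes over the $m$ classifying directions: \frocc$(S,m,\varepsilon,K)$ returns YES on $\bm{y}$ exactly when, for \emph{every} $i$ with $1\le i\le m$, the projection $\langle\bm{w}_i,\bm{y}\rangle$ lies inside one of the intervals of $R_i$. Since every interval of $R_i$ is a subinterval of the full span $[\ell_{\bm{w}_i},u_{\bm{w}_i}]$, a YES answer requires in particular $\langle\bm{w}_i,\bm{y}\rangle\in[\ell_{\bm{w}_i},u_{\bm{w}_i}]$ for all $i$, i.e. $\alpha(\bm{y},S,\bm{w}_i)=0$, i.e. $\bm{w}_i\notin C(\bm{y},S)$. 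Conversely, under the hypothesis imposed on $C(\bm{y},S)$ together with $\varepsilon=1$ (so that $R_i$ is the single interval $[\ell_{\bm{w}_i},u_{\bm{w}_i}]$), a direction outside $C(\bm{y},S)$ produces a YES vote along that dimension. Hence the event ``\frocc returns YES on $\bm{y}$'' coincides with $\bigcap_{i=1}^{m}\{\bm{w}_i\notin C(\bm{y},S)\}$.

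Next I would invoke the two properties of the classifying directions built into Definition~\ref{def:eps-rpocc}: the $\bm{w}_i$ are drawn independently, and each is uniform on $\bm{1}_d$. (Measurability of $C(\bm{y},S)$ is not a problem: $\bm{w}\mapsto\ell_{\bm{w}}$ and $\bm{w}\mapsto u_{\bm{w}}$ are continuous, so $C(\bm{y},S)$ is open in $\bm{1}_d$.) For one uniform direction, $\prob{\bm{w}_i\in C(\bm{y},S)}=\mu(C(\bm{y},S))/\mu(\bm{1}_d)$, so $\prob{\bm{w}_i\notin C(\bm{y},S)}=1-\mu(C(\bm{y},S))/\mu(\bm{1}_d)$. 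Combining with independence across $i$,
\[
\prob{\bm{y},S,m}\;=\;\prod_{i=1}^{m}\prob{\bm{w}_i\notin C(\bm{y},S)}\;=\;\left(1-\frac{\mu(C(\bm{y},S))}{\mu(\bm{1}_d)}\right)^{m},
\]
which is the asserted identity. For the corollary I would set the right-hand side to be at most $\delta$ and use $1-p\le e^{-p}$ with $p=\mu(C(\bm{y},S))/\mu(\bm{1}_d)$: then $e^{-pm}\le\delta$ suffices, i.e. $m\ge\frac{\mu(\bm{1}_d)}{\mu(C(\bm{y},S))}\log\frac{1}{\delta}$.

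The step I expect to be the real obstacle is the ``conversely'' clause above, namely that $\bm{w}_i\notin C(\bm{y},S)$ genuinely \emph{forces} a YES vote along direction $i$. When $\varepsilon=1$ this is trivial, but for $\varepsilon<1$ the projection $\langle\bm{w}_i,\bm{y}\rangle$ can fall in a gap between two intervals of $R_i$ while still lying inside $[\ell_{\bm{w}_i},u_{\bm{w}_i}]$, so one only gets the containment ``YES'' $\subseteq\bigcap_i\{\bm{w}_i\notin C(\bm{y},S)\}$ and hence the inequality $\prob{\bm{y},S,m}\le\bigl(1-\mu(C(\bm{y},S))/\mu(\bm{1}_d)\bigr)^m$ rather than equality. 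To recover the stated equality one must argue either that $C(\bm{y},S)$ is taken to be the full discriminating set and $\varepsilon=1$, or that the ``in-a-gap'' directions form a set already contained in $C(\bm{y},S)$ under the proposition's hypothesis; absent that, the upper bound is what one proves, and it is in any case all that the corollary on the required value of $m$ needs. The rest is a routine computation with independence and the normalized uniform measure on $\bm{1}_d$.
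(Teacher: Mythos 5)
Your proposal is correct and follows the only natural route: the paper itself gives no proof of the equality (it merely remarks that the bound \eqref{eq:vanilla-classifier} ``is just a simple calculation''), and your argument via independence and uniformity of the $\bm{w}_i$ is exactly the intended one. Your caveat that for $\varepsilon<1$ a projection can land in a gap between intervals of $R_i$, so that one only obtains $\prob{\bm{y},S,m}\leq\left(1-\mu(C(\bm{y},S))/\mu(\bm{1}_d)\right)^m$ rather than equality, is a genuine and correct observation that the paper glosses over, and you rightly note that this inequality is all the corollary on the required $m$ needs.
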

\begin{figure}[tb]
    \begin{subfigure}{0.32\linewidth}
        \includegraphics[width=1\columnwidth]{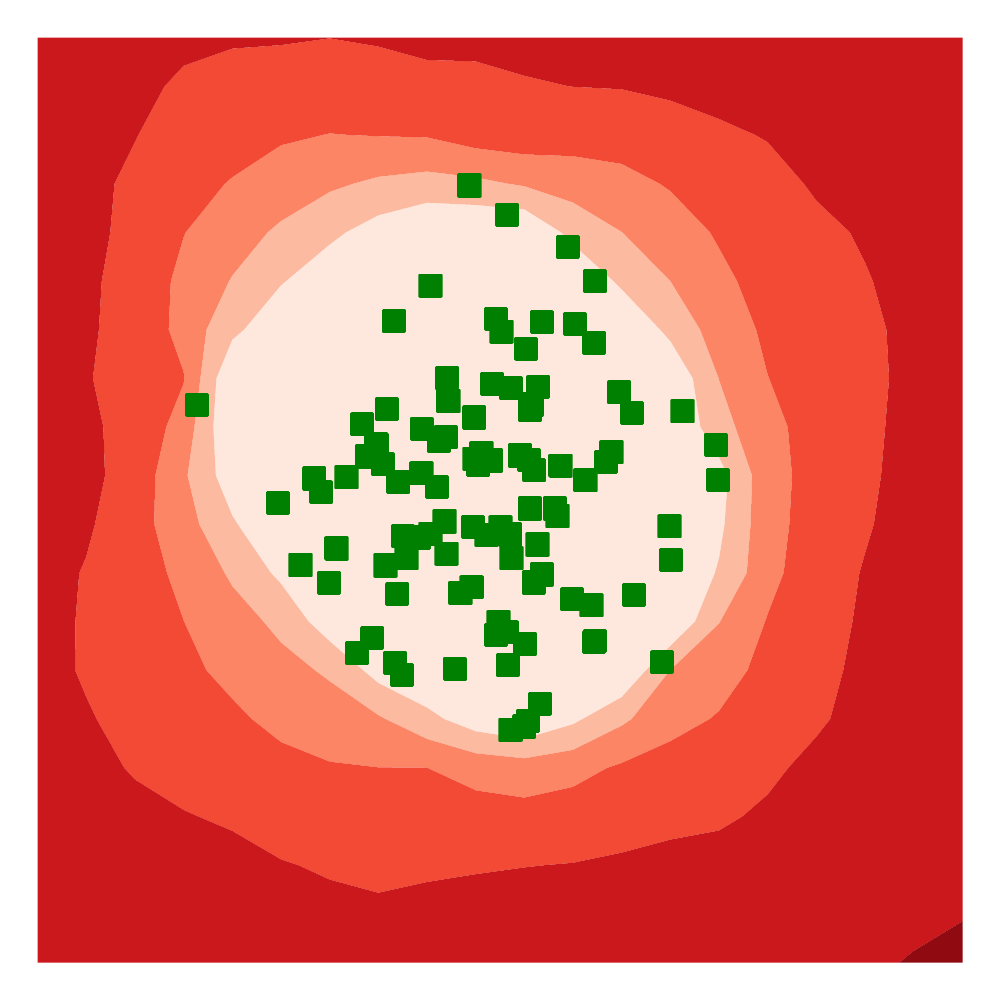}
        \subcaption{m=1}
    \end{subfigure}
    \begin{subfigure}{0.32\linewidth}
        \includegraphics[width=1\columnwidth]{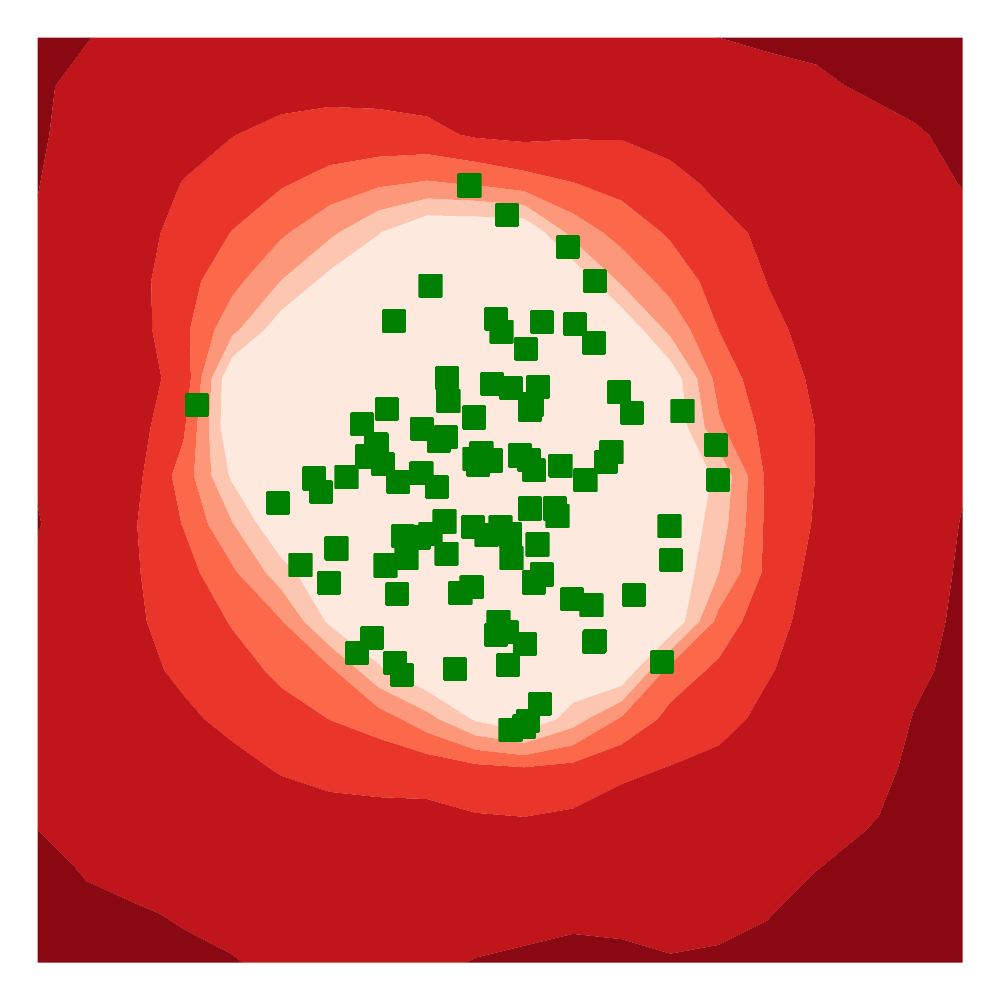}
        \subcaption{m=2}
    \end{subfigure}
    \begin{subfigure}{0.32\linewidth}
        \includegraphics[width=1\columnwidth]{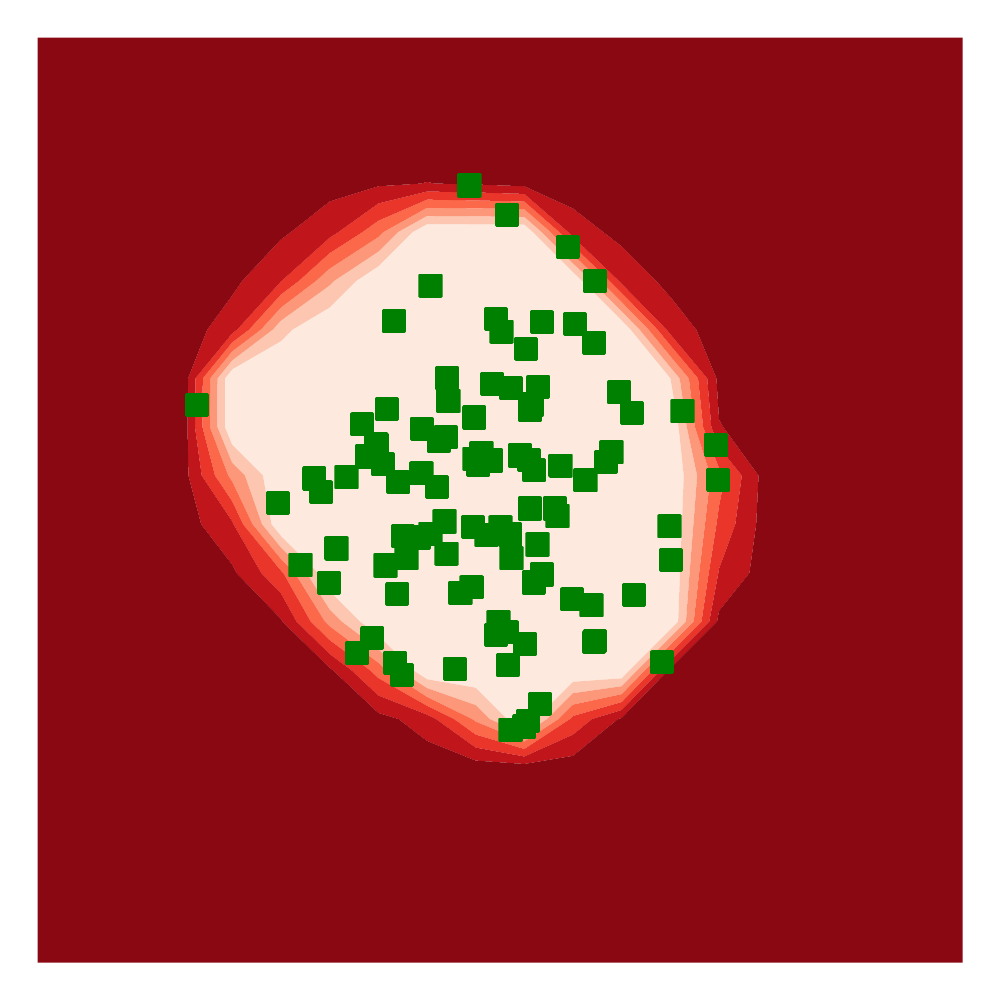}
        \subcaption{m=8}
    \end{subfigure}
    \caption{Probability contours of $\prob{\bm{y}, S, m}$ for different values of $m$. The region boundary gets more distinct with increasing $m$.}
    \label{fig:heatmaps}
\end{figure}
\ifcutlevone
    The proof of~\eqref{eq:vanilla-classifier} is just a simple calculation.
\fi
Figure~\ref{fig:heatmaps} presents the result of Proposition~\ref{prp:probability} visually.
\section{Experiments and Results}
\label{sec:expt}
We conducted a comprehensive set of experiments to evaluate the performance of \frocc on synthetic and real world benchmark datasets and compare them with various existing state of the art baselines. In this section we first present the details of the datasets, baselines, and metrics we used in our empirical evaluation, and then present the results of comparison.

\subsection{Methods Compared}%
\label{sec:methods-compared}
We compare \frocc with the following state of the art algorithms for OCC:

\begin{enumerate}[(i)]
    \item \textbf{OC-SVM:} One class SVM based on the formulation of~\citet{Scholkopf2000},
    \item \textbf{Isolation Forest:} Isolation forest algorithm~\citep{liu2008isolation},
    \item \textbf{AE:} An auto encoder based anomaly detection algorithm based on~\citet{aggarwal2015outlier},
    \item \textbf{Deep SVDD:} A deep neural-network based formulation of~\citet{ruff2018deep}. We used the implementation and the optimal model parameters provided by the authors\footnote{\url{https://github.com/lukasruff/Deep-SVDD-PyTorch}}.
\end{enumerate}

We implemented \frocc using Python 3.8 with \texttt{numpy} library. The IntervalTree subroutine (Algorithm~\ref{alg:interval-tree}) was optimized by quantizing the interval into discrete bins of sizes based on \(\varepsilon\). This allows for a vectorized implementation of the subroutine. All experiments were conducted on a server with Intel Xeon (2.60GHz), and NVIDIA GeForce GTX 1080 Ti GPU. It is noteworthy that unlike deep learning based methods (i.e., AE and Deep SVDD), \frocc does not require GPU during training as well as testing.

For OCSVM and FROCC, for each dataset we chose the best performing (in ROC) kernel among the following: \begin{inparaenum}[(a)] \item linear (dot product), \item radial basis function (RBF), \item polynomial (degree=3) and, \item sigmoid\end{inparaenum}. % moved here from results

\subsection{Datasets}
We evaluate \frocc and the baseline methods on a set of well-known benchmark datasets summarized in Table~\ref{tab:data}. The set of benchmarks we have chosen covers a large spectrum of data characteristics such as the number of samples (from a few hundreds to millions) as well as the number of features (from a handful to tens of thousands). Thus it broadly covers the range of scenarios where OCC task appears.

To adapt the multi-class datasets for evaluation in OCC setting, we perform the processing similar to~\citet{ruff2018deep}: Given a dataset with multiple labels, we assign one class the \emph{positive} label and all others the \emph{negative} label. Specifically, we select the positive classes for i. MNIST: randomly from 0, 1 and 4, ii. CIFAR 10: airplane, automobile and deer and iii. CIFAR 100: beaver, motorcycle and fox. We then split the \emph{positive} class instances into \emph{train} and \emph{test} sets. Using labelled data allows us to evaluate the performance using true labels. We \emph{do not} use any labels for training. We double the test set size by adding equal number of instances from the \emph{negative}ly labeled instances. \ignore{Samples equal to the size of \emph{test} set are added to the \emph{test} set from \emph{negative} set.} We train the OCC on the \emph{train} set which consists of only \emph{positive} examples. Metrics are calculated on \emph{test} set which consists of equal number of \emph{positive} and \emph{negative} examples.%\sbcomment{Please add the reference to the paper which followed the same procedure}

\begin{table}
    \resizebox{\columnwidth}{!}{
        \begin{tabular}{llrrrr}
            \toprule
            ID           & Name                                   & Features & Training Samples & \begin{tabular}{@{}c@{}}Test Samples \\ (Equal +/-- Split)\end{tabular} \\
            \midrule
            A1           & Diabetes~\citep{uci}                   & 8        & 268              & 508                       \\
            A2\(^\star\) & MAGIC Telescope~\citep{uci}            & 10       & 12332            & 20870                     \\
            A3\(^\star\) & MiniBooNE~\citep{uci}                  & 50       & 18250            & 30650                     \\
            A4           & Cardiotocography~\citep{uci}           & 35       & 147              & 294                       \\
            \cmidrule(lr){1-5}
            B1\(^\star\) & SATLog-Vehicle~\citep{vehicle}         & 100      & 24623            & 36864                     \\
            B2\(^\star\) & Kitsune Network Attack~\citep{kitsune} & 115      & 3018972          & 3006490                   \\
            B3           & MNIST~\citep{mnist}                    & 784      & 3457             & 5390                      \\
            B4           & CIFAR-10~\citep{cifar}                 & 3072     & 3000             & 5896                      \\
            B5           & CIFAR-100~\citep{cifar}                & 3072     & 300              & 594                       \\
            B6           & GTSRB~\citep{gtsrb}                    & 3072     & 284              & 562                       \\
            B7           & Omniglot~\citep{omniglot}              & 33075    & 250              & 400                       \\
            \bottomrule
        \end{tabular}
    }
    \caption{Dataset statistics. The datasets are divided in to two groups: Group A (A1:A4) has datasets with fewer than 100 features and Group B (B1:B7) contains datasets with greater than 100 features. \(^\star\) marks the datasets with more than \(10,000\) training samples.}
    \label{tab:data}
\end{table}

\subsection{Metrics}
\noindent\textbf{Classification Performance:} For measuring the classification performance, we use the following two commonly used metrics:
\begin{itemize}
    \item  \textbf{ROC AUC} to measure the classification performance of all methods, and
    \item \textbf{Adjusted Precision@n}, that is, fraction of true outliers among the top n points in the test set, normalized across datasets~\citep{swersky2016evaluation}.
\end{itemize}

\noindent\textbf{Computational Scalability:} For measuring the computational performance of all the methods, we use the speedup of their \textbf{training} and \textbf{testing} time over the fastest baseline method for the dataset. In other words, for an algorithm \(A\) its speedup is calculated as the ratio of training (or testing) time of the fastest \emph{baseline} method to the training (or testing) time of \(A\). We found Isolation forests to be the fastest across all scenarios, thus making it the default choice for computing the speedup metric over. All measurements are reported as an average over 5 runs for each method. %\sbcomment{there is inconsistency of usage - inference, testing, query. Please fix it. I like query in general, but it is not commonly used in ML world.}

\subsection{Results and Observations}
\begin{table}
    \resizebox{\columnwidth}{!}{
        \begin{tabular}{lcccccccc}
            \toprule
            ID           & OC-SVM                    & IsoForest                   & Deep SVDD                   & Auto Encoder              & \frocc                      \\
            \midrule
            A1           & 53.35 \(\pm\) 0.95        & 63.48 \(\pm\) 0.46          & \textbf{73.24 \(\pm\) 0.65} & 62.43 \(\pm\) 0.26        & \emph{73.18 \(\pm\) 0.26}   \\
            A2\(^\star\) & 68.12 \(\pm\) 0.85        & \textbf{77.09 \(\pm\) 0.84} & 61.39 \(\pm\) 0.87          & \emph{75.96 \(\pm\) 0.44} & 72.30 \(\pm\) 0.33          \\
            A3\(^\star\) & 54.22 \(\pm\) 0.45        & \textbf{81.21 \(\pm\) 0.14} & \emph{78.21 \(\pm\) 0.10}   & 56.27 \(\pm\) 0.22        & 69.32 \(\pm\) 0.30          \\
            A4           & 57.93 \(\pm\) 0.73        & \textbf{85.10 \(\pm\) 0.33} & 56.95 \(\pm\) 0.53          & 59.43 \(\pm\) 0.60        & \emph{83.57 \(\pm\) 0.44}   \\
            \midrule
            B1\(^\star\) & 73.51 \(\pm\) 0.41        & \emph{83.19 \(\pm\) 0.44}   & 68.36 \(\pm\) 0.64          & 76.42 \(\pm\) 0.76        & \textbf{85.14 \(\pm\) 0.74} \\
            B2\(^\star\) & 86.01 \(\pm\) 0.60        & 80.05 \(\pm\) 0.43          & \emph{84.73 \(\pm\) 0.37}   & 79.23 \(\pm\) 0.34        & \textbf{86.64 \(\pm\) 0.51} \\
            B3           & 98.99 \(\pm\) 0.11        & 98.86 \(\pm\) 0.31          & 97.04 \(\pm\) 0.08          & \emph{99.38 \(\pm\) 0.13} & \textbf{99.61 \(\pm\) 0.35} \\
            B4           & 52.21 \(\pm\) 0.39        & 51.47 \(\pm\) 0.59          & \emph{59.32 \(\pm\) 0.62}   & 56.73 \(\pm\) 0.47        & \textbf{61.93 \(\pm\) 0.64} \\
            B5           & 49.62 \(\pm\) 0.12        & 54.13 \(\pm\) 0.17          & \emph{67.81 \(\pm\) 0.39}   & 55.73 \(\pm\) 0.53        & \textbf{71.04 \(\pm\) 0.37} \\
            B6           & 63.55 \(\pm\) 0.12        & 61.23 \(\pm\) 0.35          & \textbf{71.88 \(\pm\) 0.36} & 66.73 \(\pm\) 0.44        & \emph{67.12 \(\pm\) 0.60}   \\
            B7           & \emph{90.21 \(\pm\) 0.63} & 70.83 \(\pm\) 0.35          & 72.70 \(\pm\) 0.07          & 62.38 \(\pm\) 0.12        & \textbf{92.46 \(\pm\) 0.21} \\
            \bottomrule
        \end{tabular}
    }
    \caption{Area under the ROC curve. ID corresponds to the dataset ID in Table~\ref{tab:data}. \textbf{Bold} values represent the best scores and \emph{italics} represent the second best scores.}
    \label{tab:roc}
\end{table}
\begin{table}
    \resizebox{\columnwidth}{!}{
        \begin{tabular}{lcccccccc}
            \toprule
            ID           & OC-SVM                      & IsoForest                   & Deep SVDD                   & Auto Encoder              & \frocc                      \\
            \midrule
            A1           & 59.72 \(\pm\) 0.95          & 61.64 \(\pm\) 0.05          & \emph{72.74 \(\pm\) 0.13}   & 63.54 \(\pm\) 0.14        & \textbf{72.83 \(\pm\) 0.27} \\
            A2\(^\star\) & 78.17 \(\pm\) 0.15          & \textbf{87.09 \(\pm\) 0.35} & 72.23 \(\pm\) 0.47          & 78.19 \(\pm\) 0.32        & \emph{84.31 \(\pm\) 0.50}   \\
            A3\(^\star\) & 61.05 \(\pm\) 0.56          & \textbf{79.77 \(\pm\) 0.73} & \emph{78.32 \(\pm\) 0.43}   & 66.17 \(\pm\) 0.48        & 71.91 \(\pm\) 0.36          \\
            A4           & 63.21 \(\pm\) 0.94          & \emph{82.83 \(\pm\) 0.18}   & 60.03 \(\pm\) 0.35          & 57.73 \(\pm\) 0.54        & \textbf{85.91 \(\pm\) 0.33} \\
            \midrule
            B1\(^\star\) & 74.22 \(\pm\) 0.44          & \emph{81.94 \(\pm\) 0.62}   & 68.36 \(\pm\) 0.68          & 76.42 \(\pm\) 0.60        & \textbf{89.21 \(\pm\) 0.35} \\
            B2\(^\star\) & 81.35 \(\pm\) 0.60          & \textbf{87.24 \(\pm\) 0.65} & 81.21 \(\pm\) 0.63          & 72.94 \(\pm\) 0.56        & \emph{83.21 \(\pm\) 0.65}   \\
            B3           & 92.34 \(\pm\) 0.47          & 91.65 \(\pm\) 0.65          & \emph{93.25 \(\pm\) 0.45}   & 74.32 \(\pm\) 0.37        & \textbf{99.80 \(\pm\) 0.58} \\
            B4           & 50.02 \(\pm\) 0.75          & 51.42 \(\pm\) 0.86          & \emph{57.32 \(\pm\) 0.09}   & 51.63 \(\pm\) 0.18        & \textbf{57.92 \(\pm\) 0.41} \\
            B5           & 50.08 \(\pm\) 0.32          & 50.13 \(\pm\) 0.43          & \emph{55.21 \(\pm\) 0.64}   & 53.40 \(\pm\) 0.77        & \textbf{55.62 \(\pm\) 0.38} \\
            B6           & 61.63 \(\pm\) 0.01          & 59.13 \(\pm\) 0.11          & \textbf{69.38 \(\pm\) 0.30} & \emph{64.71 \(\pm\) 0.16} & 62.46 \(\pm\) 0.36          \\
            B7           & \textbf{88.32 \(\pm\) 0.10} & 72.83 \(\pm\) 0.26          & \emph{82.70 \(\pm\) 0.51}   & 65.90 \(\pm\) 0.47        & 79.34 \(\pm\) 0.40          \\
            \bottomrule
        \end{tabular}
    }
    \caption{Adjusted \(Precision@n\). For each dataset, \(n\) is the number of positive examples in the test set of that dataset. ID corresponds to the dataset ID in Table~\ref{tab:data}. \textbf{Bold} values represent the best scores and \emph{italics} represent the second best scores.}
    \label{tab:patn}
\end{table}
\begin{table}
    \resizebox{\columnwidth}{!}{
        \begin{tabular}{lcccccccc}
            \toprule
            \multirow{2}{*}{ID} & \multicolumn{2}{c}{OC-SVM} & \multicolumn{2}{c}{Deep SVDD} & \multicolumn{2}{c}{Auto Encoder} & \multicolumn{2}{c}{\frocc}                                                                        \\
            \cmidrule(lr){2-3}                \cmidrule(lr){4-5}                \cmidrule(lr){6-7}                   \cmidrule(lr){8-9}
                                & Train                      & Test                          & Train                            & Test                       & Train & Test  & Train                               & Test           \\
            \midrule
            A1                  & 0.055                      & 0.862                         & 0.034                            & 0.038                      & 0.089 & 0.090 & \textbf{1.111}\textsuperscript{(2)} & \textbf{1.111} \\
            A2\(^\star\)        & 0.057                      & 0.847                         & 0.032                            & 0.038                      & 0.085 & 0.088 & \textbf{1.333}\textsuperscript{(3)} & \textbf{1.075} \\
            A3\(^\star\)        & 0.100                      & 0.840                         & 0.031                            & 0.039                      & 0.082 & 0.087 & \textbf{1.282}\textsuperscript{(3)} & \textbf{1.075} \\
            A4                  & 0.064                      & 0.870                         & 0.038                            & 0.038                      & 0.091 & 0.091 & \textbf{1.190}\textsuperscript{(2)} & \textbf{1.099} \\
            \midrule
            B1\(^\star\)        & 0.074                      & 0.855                         & 0.032                            & 0.038                      & 0.087 & 0.089 & \textbf{1.075}\textsuperscript{(1)} & \textbf{1.087} \\
            B2\(^\star\)        & 0.043                      & 0.524                         & 0.028                            & 0.038                      & 0.062 & 0.078 & \textbf{1.786}\textsuperscript{(1)} & \textbf{1.176} \\
            B3                  & 0.099                      & 0.794                         & 0.043                            & 0.039                      & 0.050 & 0.080 & \textbf{1.299}\textsuperscript{(1)} & \textbf{1.149} \\
            B4                  & 0.099                      & 0.820                         & 0.040                            & 0.039                      & 0.069 & 0.083 & \textbf{1.389}\textsuperscript{(1)} & \textbf{1.099} \\
            B5                  & 0.100                      & 0.826                         & 0.039                            & 0.039                      & 0.074 & 0.085 & \textbf{1.370}\textsuperscript{(1)} & \textbf{1.087} \\
            B6                  & 0.098                      & 0.833                         & 0.039                            & 0.039                      & 0.079 & 0.086 & \textbf{1.351}\textsuperscript{(2)} & \textbf{1.087} \\
            B7                  & 0.100                      & 0.806                         & 0.035                            & 0.039                      & 0.061 & 0.082 & \textbf{1.449}\textsuperscript{(1)} & \textbf{1.111} \\
            \bottomrule
        \end{tabular}
    }
    \caption{Training and test speedups for baselines and \frocc. It is calculated as \(speedup(M) = \frac{time(IF)}{time(M)}\), where \(time(IF)\) is the time for Isolation Forest, the fastest baseline. Higher is better. A number greater than 1 indicates the method is faster than the fastest baseline. Superscript shows the rank of the methods based on ROC-AUC scores for \frocc.}
    \label{tab:speedup}
\end{table}

Table~\ref{tab:roc} compares the ROC scores of the baselines on the datasets. From the table we note that \frocc outperforms the baselines in six of the eleven datasets. Further, we note that for high dimensional datasets --i.e., datasets with \emph{\(\geq\) 100 features} (B1-B7), \frocc outperforms the baselines in \textbf{six out of seven datasets while ranking second in the remaining one}. This demonstrates the superiority of \frocc for high dimensional data which are known to be challenging for most OCC methods.  Table~\ref{tab:patn} summarizes the \emph{Adjusted Precision@n} (for a given dataset, \(n\) is the number of positive examples in test data). %in addition to \emph{area under the ROC curve} to provide a more complete picture of the performance.
Here too we observe that \frocc outperforms the baselines in six datasets. In high dimensional datasets, it is superior in four out of seven datasets, and is the second best in one. And even in datasets where the performance of \frocc is not the highest, it does not drop as low as traditional shallow baselines, which in two occasions fail to learn altogether.

Next, we turn our attention to the computational performance of all the methods summarized in Table~\ref{tab:speedup}, where we present the training and testing speedups. These results particularly highlight the  scalability of \frocc over all other baselines, including the traditionally efficient shallow methods such as Isolation forest. Against deep learning based methods such as Deep SVDD and Auto Encoder, \frocc shows nearly \emph{1-2 orders} magnitude speedup. Overall, we notice that \frocc achieves up to \(68 \times\) speedup while beating the baselines, and up to \(35 \times\) speedup while ranking second or third. %\sbcomment{I suggest you structure this discussion better: first, explain what each table is showing, and what insights individual table provides. Then we can get into discussion points.}%

\subsection{Discussion of Results} 
Here we discuss the results from two broad perspectives:
\paragraph{ROC and Speed trade-offs:} The speedups show that \frocc is the fastest method. From Table~\ref{tab:roc}, we see that in six datasets, \frocc outperforms the baseline as well. Of the cases it doesn't outperform the baselines, it ranks second thrice out of five and in each of the these cases (datasets A1, A4 and B7) it is 35, 29 and 1.19 times faster than the best performing method respectively. Thus, depending on the application, the speed vs ROC trade-off may make \frocc the method of choice even when it doesn't outperform other algorithms.

\paragraph{Scalability:} While we are faster than baselines in all datasets, large datasets like MiniBooNE (A3), SATLog-Vehicle (B1) and especially Kitsune network attack (B2) datasets -- where \frocc is 68 times faster than the second best method -- can be render the best performing deep baselines infeasible, leaving scalable methods like IsolationForest and \frocc as the only option. Given that \frocc consistently ranks among top three (1: 6 times, 2: 3 times, 3: 2 times), whereas IsolationForest's rank varies inconsistently and go up to 5 (out of 5), \frocc is the algorithm of choice when scalability is a concern.

% To summarize, our experimental observations:
% \begin{enumerate}
%     \item We outperform the baselines in six out of ten datasets in both ROC AUC and adjusted \(precision@n\), while remaining competitive in other datasets.
%     \item We are 2 orders of magnitude faster than the deep benchmarks.
%     \item Our performance doesn't drop as low as non-deep benchmarks, which, in two occasions fail to learn at all.
% \end{enumerate}

\subsection{Parameter Sensitivity}
In previous section we mentioned that the parameters were chosen based on grid search. In this section we present a study of sensitivity of \frocc to the parameters.

\paragraph{Sensitivity to dimensions.} From the analysis in Section~\ref{sec:theory}, we know that the performance of \frocc improves with increasing number of classification dimensions. Figure~\ref{fig:dim-sens} shows this variation for a few selected datasets.

\begin{figure}[htb]
    \begin{subfigure}{0.45\linewidth}
        \includegraphics[width=1\columnwidth]{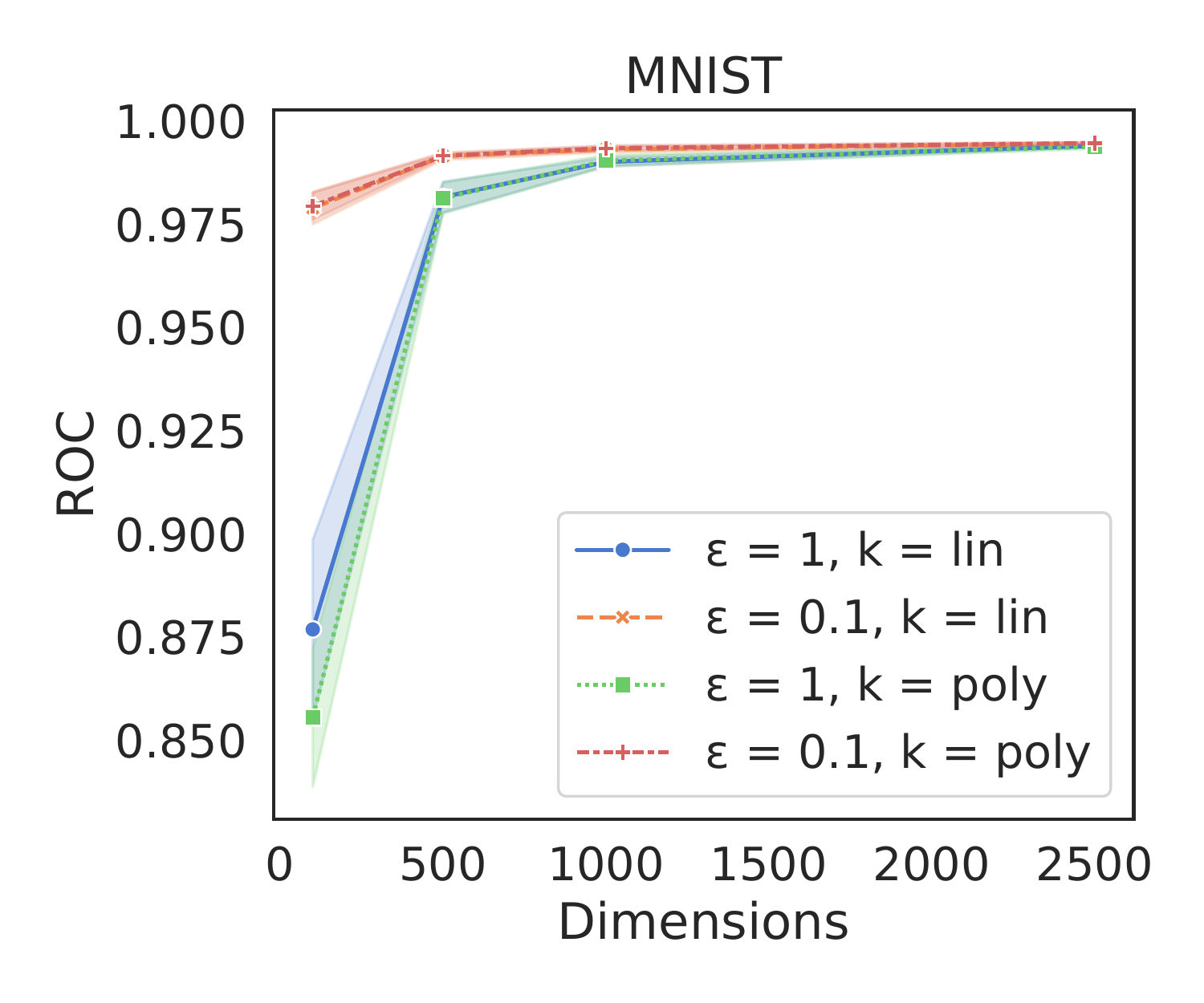}
    \end{subfigure}
    \begin{subfigure}{0.45\linewidth}
        \includegraphics[width=1\columnwidth]{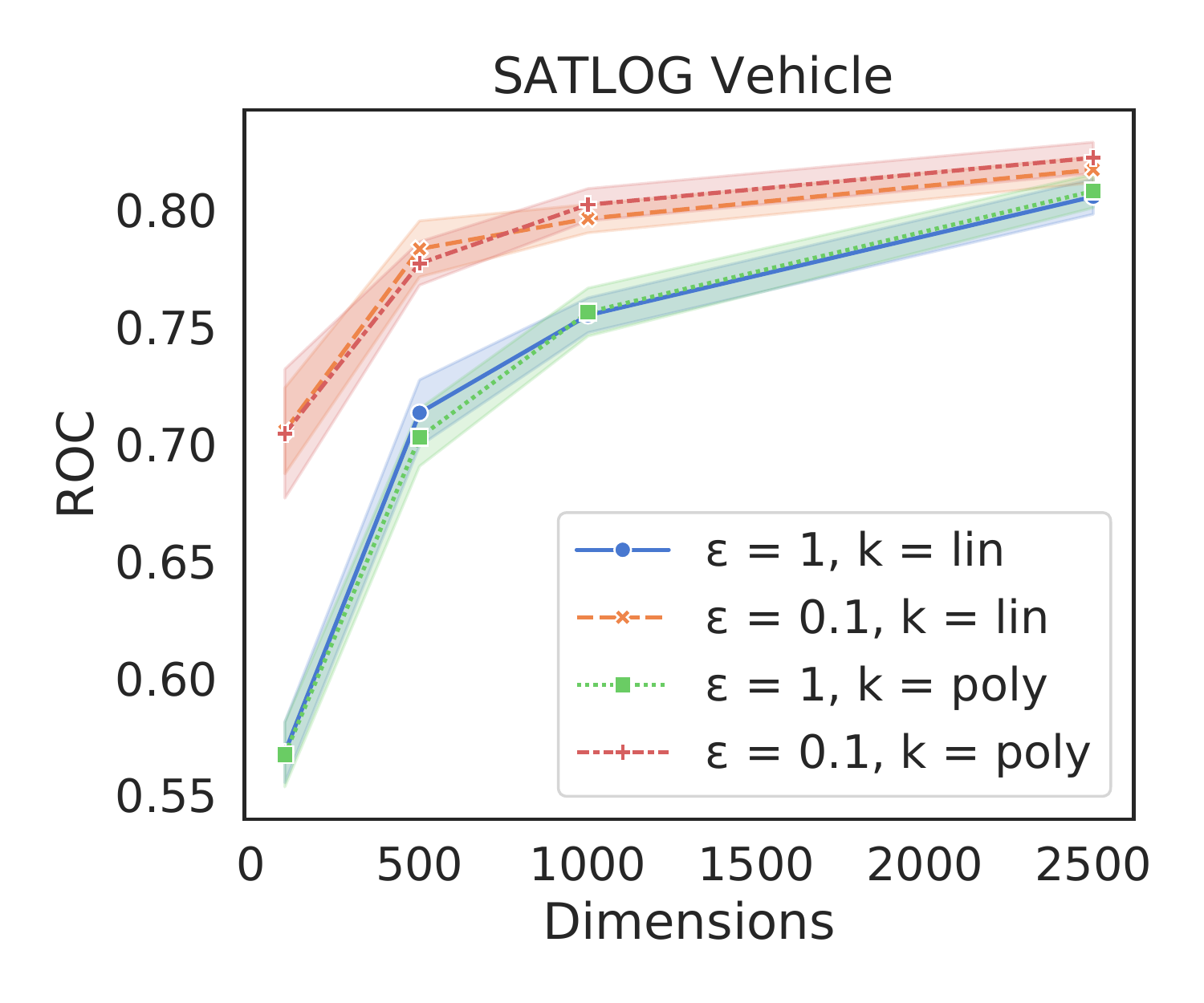}
    \end{subfigure}

    \begin{subfigure}{0.45\linewidth}
        \includegraphics[width=1\columnwidth]{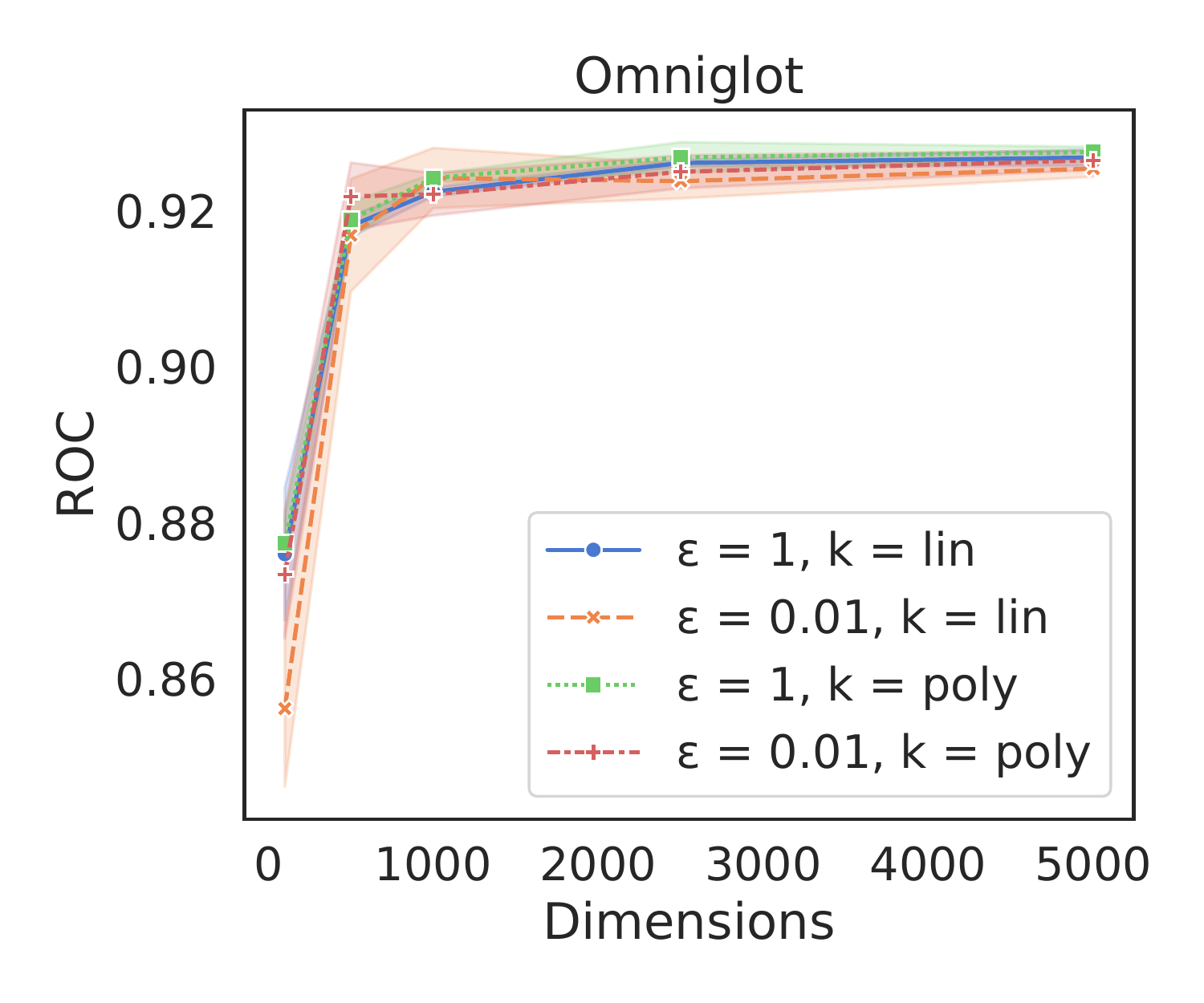}
    \end{subfigure}
    \begin{subfigure}{0.45\linewidth}
        \includegraphics[width=1\columnwidth]{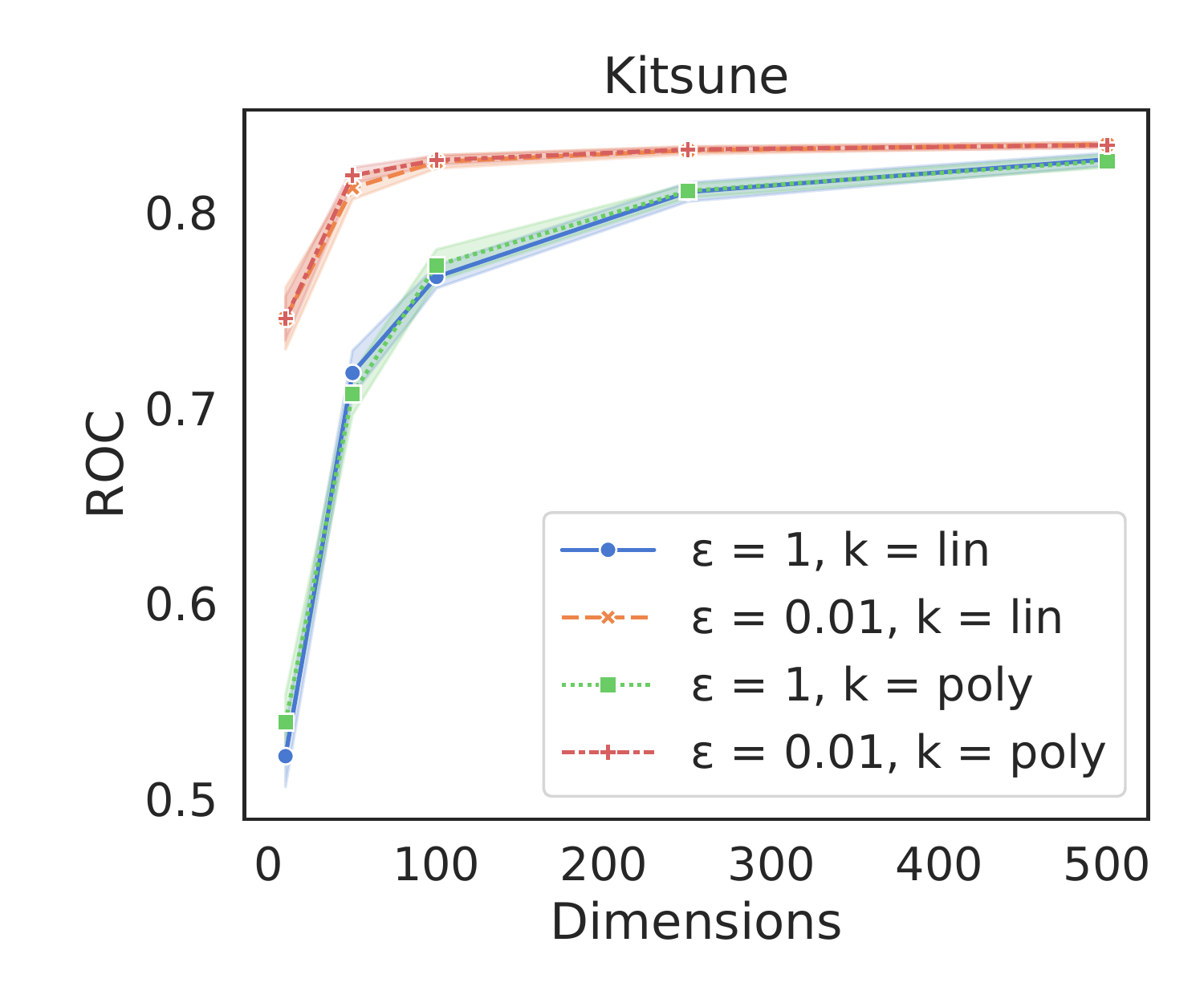}
    \end{subfigure}
    \caption{Variation of area under the ROC with Dimensions for MNIST, SATLOG Vehicle, Omniglot and Kitsune network attack dataset. We observe that the metric improves monotonically, confirming the theoretical claim.}
    \label{fig:dim-sens}
\end{figure}

\paragraph{Sensitivity to \(\varepsilon\).} The separation parameter \(\varepsilon\) is a dataset specific parameter and relates to the minimum distance of separation between positive examples and potential negative examples. In practice this comes out to be around the order of 0.1 or 0.01, as shown in Figure~\ref{fig:eps-sens}

\begin{figure}[htb]
    \begin{subfigure}{0.45\linewidth}
        \includegraphics[width=1\columnwidth]{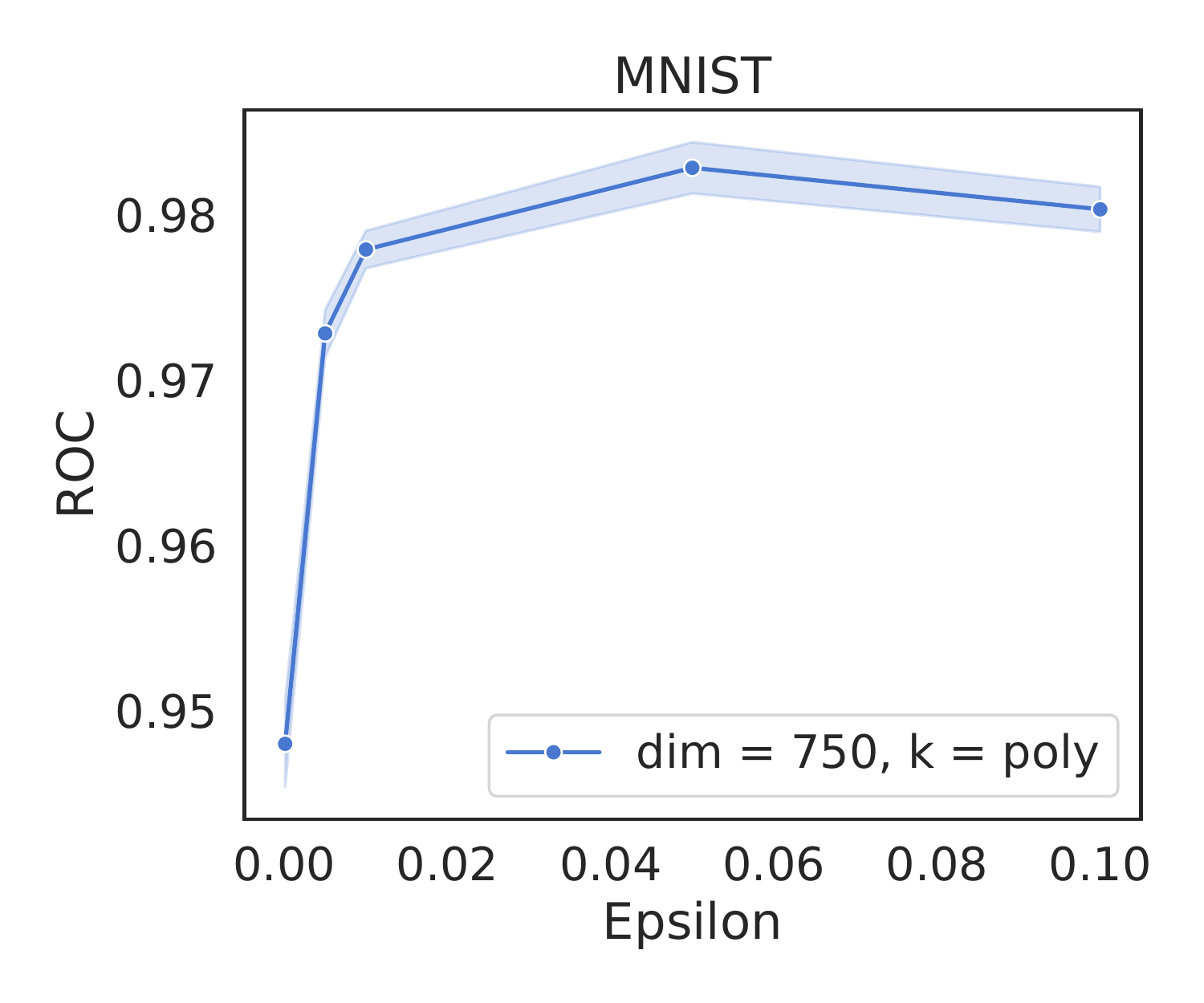}
    \end{subfigure}
    \begin{subfigure}{0.45\linewidth}
        \includegraphics[width=1\columnwidth]{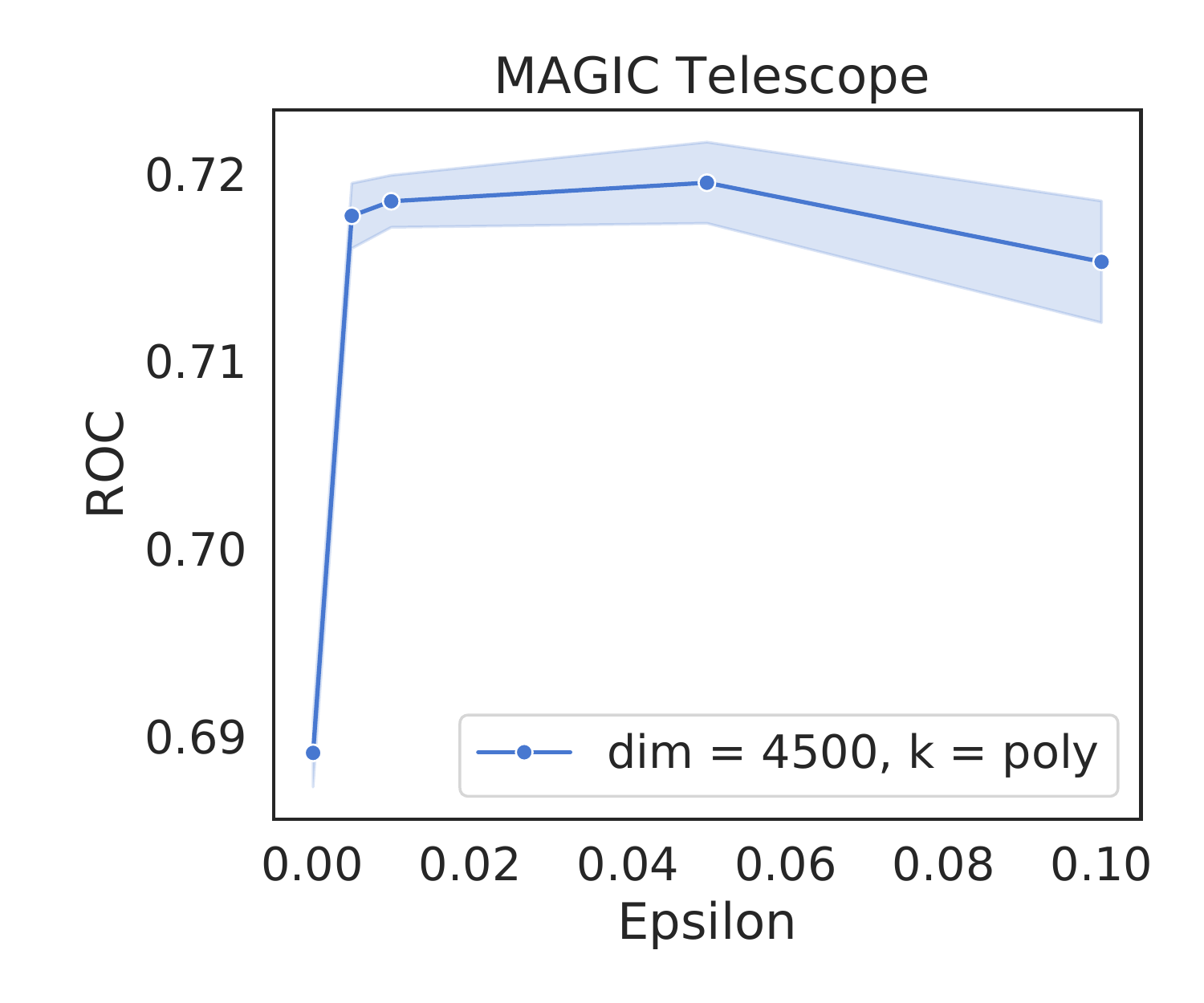}
    \end{subfigure}
    \caption{Variation of area under the ROC with \(\varepsilon\) for MNIST and MAGIC Telescope dataset.}
    \label{fig:eps-sens}
\end{figure}
\section{Conclusion}
\label{sec:conclusion}
In this paper, we presented a surprisingly simple and highly effective one-class classification strategy based on projecting data onto randomly selected vectors from the unit sphere. By representing these projections as a collections of intervals we developed \frocc. Our experiments over real and synthetic datasets reveal that despite its extreme simplicity --and the resulting computational efficiency-- \frocc is not just \emph{better} than the traditional methods for one-class classification but also better than the highly sophisticated (and expensive) models that use deep neural networks.

As part of our future work, we plan to explore extending the \frocc algorithms to the classification task under highly imbalanced data, and to application settings where high-performance OCC are crucial.

% \input{sections/declaration.tex}
% \begin{acknowledgements}
%     The authors thank the IBM AI Horizons Network and Nutanix, Inc. for their valuable insights.
% \end{acknowledgements}

% BibTeX users please use one of
\bibliographystyle{spbasic}
\bibliography{main}

\end{document}
% end of file template.tex